\newcommand{\removelatexerror}{\let\@latex@error\@gobble}
\newtheorem{definition}{Definition}[section]
\newtheorem{theorem}{Theorem}[section]
\newtheorem{assumption}{Assumption}[section]
\begin{document}
%
\title{Modeling Multiple Views via Implicitly Preserving Global Consistency and Local Complementarity}
%
%
%
%

\author{Jiangmeng~Li,
        Wenwen~Qiang,
        Changwen~Zheng,
        Bing~Su,
        Farid~Razzak,
        Ji-Rong Wen,~\IEEEmembership{Senior Member,~IEEE}
        and~Hui~Xiong,~\IEEEmembership{Fellow,~IEEE}
\IEEEcompsocitemizethanks{\IEEEcompsocthanksitem J. Li and W. Qiang are with the University of Chinese Academy of Sciences, Beijing, China. They are also with the Science \& Technology on Integrated Information System Laboratory, Institute of Software Chinese Academy of Sciences, Beijing, China. E-mail: jiangmeng2019@iscas.ac.cn, a01114115@163.com. They contributed equally to this work.
\IEEEcompsocthanksitem C. Zheng is with the Science \& Technology on Integrated Information
System Laboratory, Institute of Software Chinese Academy of Sciences,
Beijing, China. E-mail: changwen@iscas.ac.cn.
\IEEEcompsocthanksitem F.Razzak is with the New York University \& Columbia University, New York, USA. E-mail: farid.razzak@nyu.edu.
\IEEEcompsocthanksitem B. Su and J.-R. Wen are with the Beijing Key Laboratory of Big Data Management and Analysis Methods, Gaoling School of Artificial Intelligence, Renmin University of China, Beijing, 100872, China. E-mail: subingats@gmail.com;  jrwen@ruc.edu.cn. Corresponding author: Bing Su.
\IEEEcompsocthanksitem H. Xiong is with Thrust of Artificial Intelligence, the Hong Kong University of Science and Technology (Guangzhou), Guangzhou, China. He is also with Department of Computer Science \& Engineering, the Hong Kong University of Science and Technology, Hong Kong SAR, China. E-mail: xionghui@ust.hk.
\IEEEcompsocthanksitem ©2021 IEEE. Personal use of this material is permitted. Permission from IEEE must be obtained for all other uses, in any current or future media, including reprinting/republishing this material for advertising or promotional purposes, creating new collective works, for resale or redistribution to servers or lists, or reuse of any copyrighted component of this work in other works.
}}
%
%

\markboth{SUBMITTED TO IEEE TRANSACTIONS ON KNOWLEDGE AND DATA ENGINEERING}%
{Shell \MakeLowercase{\textit{et al.}}: Bare Demo of IEEEtran.cls for Computer Society Journals}
%



\IEEEtitleabstractindextext{%
\begin{abstract}
While self-supervised learning techniques are often used to mine hidden knowledge from unlabeled data via modeling multiple views, it is unclear how to perform effective representation learning in a complex and inconsistent context. To this end, we propose a new multi-view self-supervised learning method, namely \textit{consistency and complementarity network} (CoCoNet), to comprehensively learn global inter-view consistent and local cross-view complementarity-preserving representations from multiple views. To capture crucial common knowledge which is implicitly shared among views, CoCoNet employs a global consistency module that aligns the probabilistic distribution of views by utilizing an efficient discrepancy metric based on the generalized sliced Wasserstein distance. To incorporate cross-view complementary information, CoCoNet proposes a heuristic complementarity-aware contrastive learning approach, which extracts a complementarity-factor jointing cross-view discriminative knowledge and uses it as the contrast to guide the learning of view-specific encoders. Theoretically, the superiority of CoCoNet is verified by our information-theoretical-based analyses. Empirically, our thorough experimental results show that CoCoNet outperforms the state-of-the-art self-supervised methods by a significant margin, for instance, CoCoNet beats the best benchmark method by an average margin of 1.1\% on ImageNet.

\end{abstract}

\begin{IEEEkeywords}
unsupervised learning, self-supervised learning, representation learning, multi-view, regularization, Wasserstein distance.
\end{IEEEkeywords}}

\maketitle

\IEEEdisplaynontitleabstractindextext

%
\IEEEpeerreviewmaketitle

\IEEEraisesectionheading{\section{Introduction}\label{sec:introduction}}
\begin{figure*}
	\vskip -0in
	\begin{center}
		\centerline{\includegraphics[width=1.95\columnwidth]{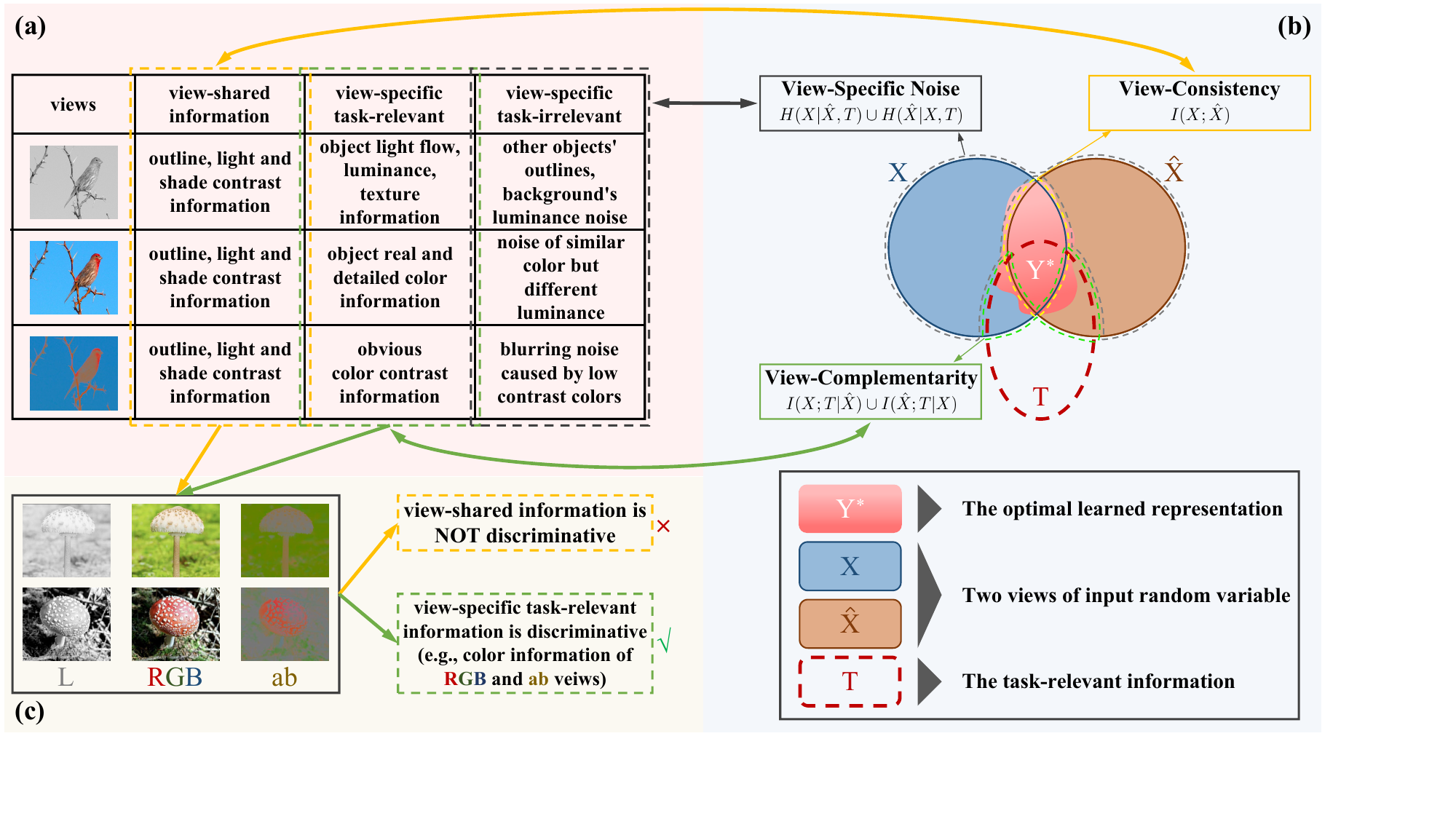}}
		\vskip -0.in
		\caption{Illustration of the theoretical analysis based on information theory, where the \textcolor[RGB]{255,143,00}{\textit{yellow}} boxes show the shared view-consistency information, the \textcolor[RGB]{66,99,00}{\textit{green}} boxes represent the view-specific and task-relevant information which is also the desired view-complementarity information, and the \textcolor[RGB]{99,99,99}{\textit{grey}} boxes denote the view-specific but task-irrelevant noise information. (a) An example of the three mentioned information in practical application; (b) the definitions of the mentioned information in the information-theoretical perspective; (c) The application of using multiple views to learn discriminative information on a specific image classification task, which proves that only mining view-shared information is \textit{not} enough to learn discriminative representations on benchmark datasets, e.g., ImageNet \cite{ffl09} has many similar fine-grained categories.}
		\label{fig:infotheory}
	\end{center}
	\vskip -0.35in
\end{figure*}

\IEEEPARstart{S}{elf-supervised} learning (SSL) aims to learn representations from unlabeled data that nonetheless can have wide-reaching benefits. The key to the problem lies in designing appropriate SSL objectives. Recent works explore how to maximize the mutual information (MI) between the inputs and outputs of the encoder. For example, the MI between high dimensional continuous random variables is effectively estimated by neural networks over gradient descent \cite{ib18}, and the MI between the high-level features and the local regions of the low-level features are jointly maximized \cite{rdh19}. The capacity of the encoder is crucial for estimating the MI between input-output pairs, and the ultimate goal of this approach is to encode the discriminative information for downstream tasks (e.g., classification) into representations. However, maximizing the MI between the input and the output of an encoder over a single view may encode the view-specific task-irrelevant information into the learned representations. These methods differ from the normal human learning process in how observations are represented. Humans have a tendency to perceive items from a variety of perspectives, including aural, gustatory, and visual. Concretely, the single-view-based methods are unable to extract task-relevant information from other views.

Recently proposed self-supervised learning methods, for example, SimCLR \cite{tc20}, SwAV \cite{cm20}, MoCo \cite{2020Km}, AMDIM \cite{pb19}, gLMSC \cite{cqz20}, and CMC \cite{ylt20}, have extended maximizing the MI between the encoder input and output on single-view data to maximizing the MI between the same samples under different views on multi-view data. As an example, given two views $X$ and $\hat X$ of image data, these methods concentrate on maximizing the MI $I({X};{\hat X})$. The assumption behind these methods is that the task-relevant information lies mostly in the shared information between the different views \cite{2008Sridharan}. Since the background of the data in different views may be different, maximizing the MI between the same data in different views will cause the encoder to focus on the shared information of the foreground in different views. It should be noted that for each view, the task-relevant discriminative information that is unique to that view also exists, which is referred to the view-specific and task-relevant information. In Figure \ref{fig:infotheory} (a), we show an example of such view-specific and task-relevant information in image data for classification. However, there are no additional terms in the objective of the benchmark methods to extract the view-specific and task-relevant information. In Figure \ref{fig:infotheory} (c), we further show an application to demonstrate that only mining view-shared information is not enough so that mining the view-specific and task-relevant information can improve the general discriminability of the learned representations.

According to information theory, the information contained in the input is divided into three parts. Figure \ref{fig:infotheory} (b) shows an example with a two-view dataset, where $X$ and $\hat X$ denote two views of a same sample, respectively, $T$ denotes task-relevant information or label-relevant information, and $Y^*$ denotes the optimal learned representation. The three parts of $X$ and $\hat X$ are as follows: the view-consistency information $I({X};{\hat X})$ to denote the view-shared information, which refers to the part surrounded by the yellow line; the view-complementarity information $I\left( {{X};T\left| {\hat X} \right.} \right)$ and $I\left( {{\hat{X}};T\left| {X} \right.} \right)$ to denote the view-specific task-relevant information, which refer to the part surrounded by the green lines; and the view-specific noise $H\left( {{X}\left| {{\hat X}} \right.,T} \right)$ and $H\left( {{\hat{X}}\left| {{X}} \right.,T} \right)$ to denote the view-specific task-irrelevant information, which refers to the part enclosed by the grey lines. Meanwhile, we give their formal definitions in Section \ref{sec:ta}. Therefore, we suppose the discriminative learned representation should contain both view-consistency and view-complementaity information and discard view-specific noise, i.e., $H\left( Y^* \right) = I\left( {X;\hat X} \right) + I\left( {X;T\left| {\hat X} \right.} \right) + I\left( {\hat X;T\left| X \right.} \right)$.

However, benchmark methods are difficult to achieve such a objective. We rethink the learning paradigms of conventional self-supervised multi-view learning methods from the perspective of information theory, which is demonstrated in Figure \ref{fig:concept}. As shown in Figure \ref{fig:concept} (a), methods that maximize the MI between the inputs and the outputs of the encoder over a single view aim to extract the task-relevant information contained in a single view, e.g., $I({X};{T})$, which refer to the red shaded part. However, such a built self-supervision problem is not enough to make the model to capture task-relevant information so that, after training, the optimal learned representation $Y^*$ may contain the view-specific noise, i.e., $H\left( {{X}, {{Y^*}} \left| \right.T} \right)$, which is denoted by the grey shaded part. Also, the task-relevant information contained in the other view, e.g., $I\left( {\hat X;T\left| X \right.} \right)$, can not be extracted. As demonstrated in Figure \ref{fig:concept} (b), the benchmark methods that maximize the MI between the different views of a same sample can only extract the view-consistency information contained in the $I({X};{\hat X})$ part and discard the view-specific noise $H\left( {{X}\left| \right.{{\hat X}}, T} \right)$ and $H\left( {{\hat X}\left| \right.{{X}}, T} \right)$. However, the view-complementaity information contained in $H\left( {{X}, T \left|\right.{{\hat X}}} \right)$ and $H\left( {{X}, T \left|\right.{{\hat X}}} \right)$ may also be discarded. Therefore, we motivate our method to sufficiently capture view-consistency and -complementarity information while discarding the view-specific noise, and the conceptual learning paradigm of our method is demonstrated in Figure \ref{fig:concept} (c).

\begin{figure}[t]
	\vskip -0.in
	\begin{center}
		\centerline{\includegraphics[width=0.99\columnwidth]{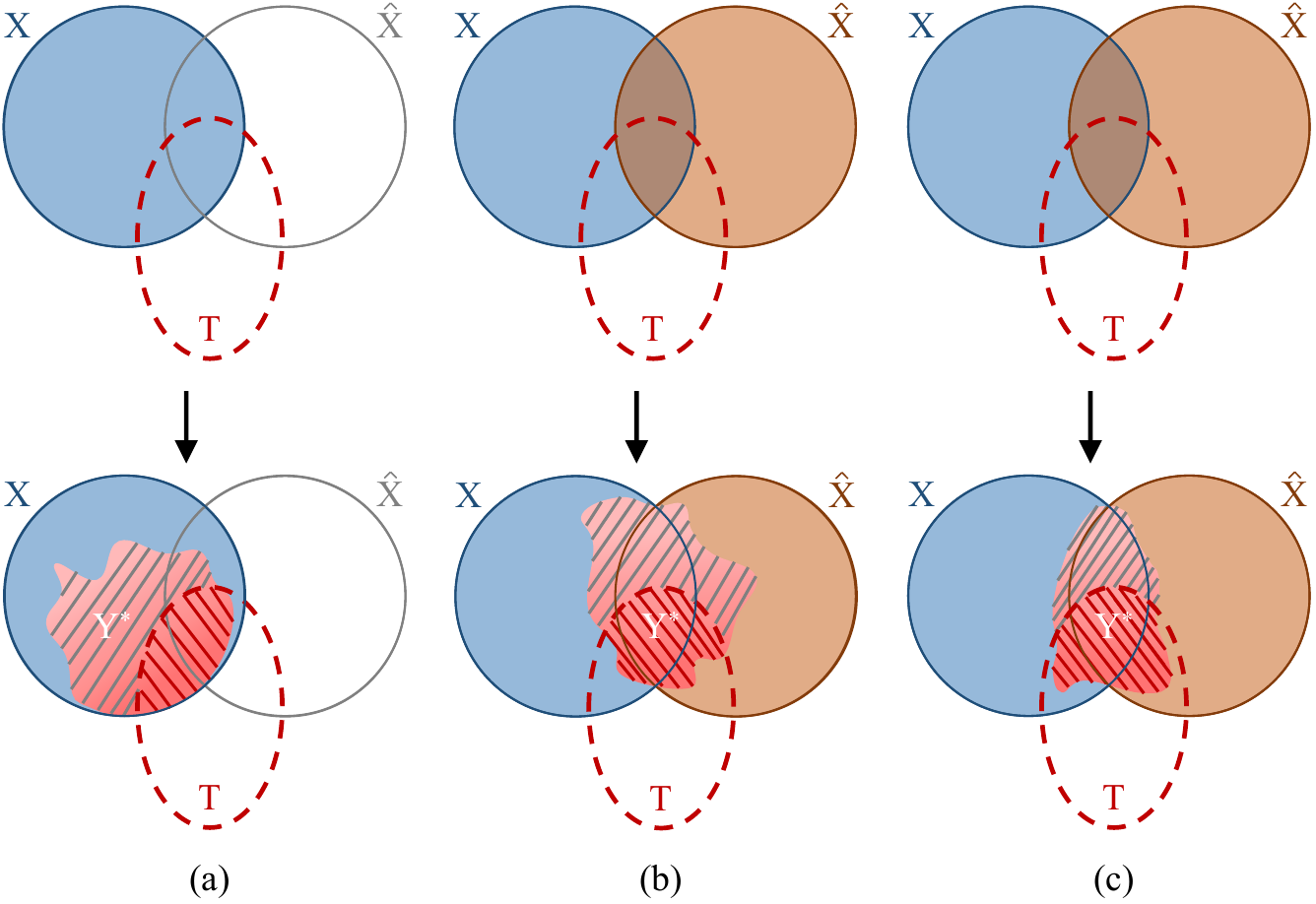}}
		\vskip -0.1in
		\caption{The conceptual learning paradigm plots of SSL methods. (a) the single-view SSL approach maximizing the MI between the inputs and the outputs of the encoder; (b) the conventional multi-view SSL approach maximizing the MI between the different views of a same sample; (c) our method. For the optimal learned representation $Y^*$, the \textcolor[RGB]{143,00,00}{\textit{red}} shaded part denotes the task-relevant information, and the \textcolor[RGB]{99,99,99}{\textit{grey}} shaded part denotes the task-irrelevant noise.}
		\label{fig:concept}
	\end{center}
	\vskip -0.35in
\end{figure}
To this end, we propose an integrated SSL method for modeling multi-view data called \textit{consistency and complementarity network} (CoCoNet). It projects all views into a latent space to obtain the feature representations and minimizes the generalized sliced Wasserstein distance discrepancy metric between the distribution of different views to enhance the consistency of multiple views in a global manner. For local single views, CoCoNet proposes a novel complementarity-aware contrastive learning approach, which leverages the complementarity factor to guide the encoders to capture the view-complementariry discriminative information and eliminate view-specific noise. In this way, CoCoNet aggregates the advances of multiple views and reduces the empirical risk of learning from each single view. Concretely, our proposed method aims to learn (albeit not fully) discriminative representations by using the strict consistency-preserving network to capture $I(X;\hat{X})$, and the proposed complementarity-aware contrastive learning approach prompts to capture $I(X;T|\hat{X})$ and $I(\hat{X};T|X)$. It is worth noting that the proposed CoCoNet is a general unsupervised representation learning model and the learned representation can be applied in a wide range of downstream tasks. In the experiments, we verify the effectiveness of CoCoNet on image classification tasks using multi-view data. The major contributions are four-fold:
\begin{itemize}
	\item We minimize a specific discrepancy metric to align the distributions of different views. As a result, the shared information between multiple views is extracted. This is to constrain our model to learn the view-consistency information, thereby reducing the impact of view-specific and task-irrelevant noise.
	\item We propose a heuristic complementarity-aware contrastive learning approach to enable the encoders to gain the view-specific and task-relevant information by using a novel complementarity-factor.
	\item We provide the information-theory-based analyses to demonstrate that preserving the global consistency and local complementarity can improve the discriminability of the learned multi-view representations.
	\item Following the protocol of \cite{ka19}, we perform empirical evaluations. Results show that CoCoNet outperforms previous works on benchmark and practical datasets. We have also demonstrated the generality of our method to different forms and types of multiple-view data with different characteristics.
\end{itemize}

\begin{figure*}
	\vskip -0in
	\begin{center}
		\centerline{\includegraphics[width=1.95\columnwidth]{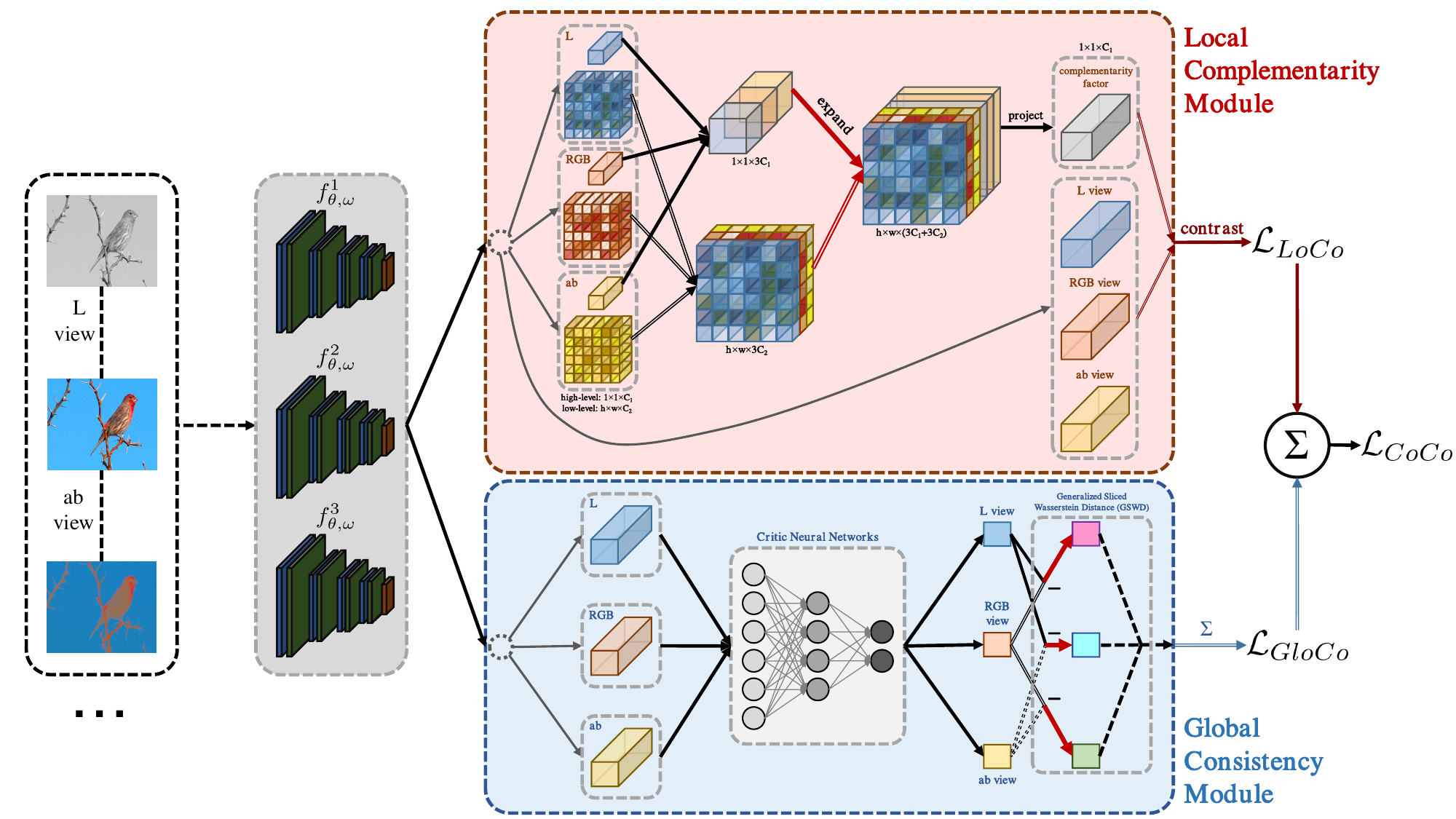}}
		\vskip -0.in
		\caption{Overview of the proposed CoCoNet, and we demonstrate an example of adopting three views: RGB view, L view, and ab view. It consists of two modules: the local complementarity module for capturing the view-complementarity information and the global consistency module for acquiring the view-consistency information, which can jointly eliminate view-specific noise and capture both of view-shared and view-specific discriminative information.}
		\label{fig:algoframe}
	\end{center}
	\vskip -0.35in
\end{figure*}

\section{Related works}
{\bf{Unsupervised learning.}}
Unsupervised representation learning gets rid of the reliance on labeled data \cite{bg13}. It starts with classical methods (without the use of deep neural networks), such as independent component analysis (ICA) \cite{bag95}, self-organizing maps \cite{kn98}, and principal components analysis (PCA) \cite{jit02}. SSL is a specific kind of unsupervised learning. However, there is a principal difference between it and classical unsupervised learning; it requires a designed generator of supervised learning problems. SSL methods must capture helpful information about the data to solve the generated problems.

SSL performs well with use of deep neural networks, and it started with seminal techniques (e.g., Boltzmann machines \cite{sp86, sr09}, autoencoders \cite{hge06}, variational autoencoders \cite{dpk14}, ${\beta}$ variational autoencoders \cite{aaa16}, generative adversarial networks \cite{gfi14}, adversarial autoencoders \cite{am15}, autoregressive models \cite{avd16}, BiGAN (a.k.a. adversarially learned inference with a deterministic encoder \cite{jd17}), Split-Brain Autoencoders (SplitBrain) \cite{rz17}, etc.). Recently, SSL is used in many fields, e.g., the NLP, vision, and robotics communities fields \cite{dj18, sp18}, and with the development of contrastive learning, several approaches based on it have come to the forefront, for instance, Noise As Targets (NAT) \cite{pb17}, Contrastive Predictive Coding (CPC) \cite{vdo18}, Deep InfoMax (DIM) \cite{rdh19}, a simple framework for contrastive learning of visual representations (SimCLR) \cite{tc20}, Momentum Contrast (MoCo) \cite{2020Km}, and Contrastive Multiview Coding (CMC) \cite{ylt20}. Existing generative models that maximize MI are also popular in this research area \cite{vp10, rdh19}. However, learning the representations from a single view does not significantly improve performance of the task.

{\bf{Multi-view learning.}}
In order to capture information from different views, existing Multi-View Learning methods jointly consider multiple views for different downstream tasks (e.g., clustering \cite{zyy19} and classification \cite{hz17}). The multi-view representation learning methods based on Canonical Correlation Analysis (CCA) \cite{hh35} are representative, which project different views into a common space, e.g., kernelized CCA \cite{sa06}, CCA-based deep neural network \cite{ww15}, and semi-pair and semi-supervised generalized correlation analysis (S2GCA) \cite{xhc12}. The unsupervised multi-view learning methods \cite{ylt20, rdh19, pb19} have also shown remarkable success in multi-view representation learning, while there is a crucial issue that learning from unaligned multiple views can lead to the poor performance of the representations.

{\bf{Distributions aligning.}}
Refer to the alignment of domain distributions, and the existing distribution alignment methods can be divided into three categories \cite{fzz21}. Instance-based approaches \cite{chen02} align the distributions by sub-sampling the training data of the two domains. Parameter-based approaches \cite{you2019} add adaptation layers or adaptive normalization layers to align the distributions. Furthermore, representation learning (RL) based approaches \cite{wu10, zhao09} primarily map the input to a common latent space and then align the two distributions in the latent space. Further, employing deep neural networks to align the distributions by minimizing a certain metric \cite{car01, sohn201} is a standard method found in RL-based approaches. The metrics of the RL-base approaches include the KL-divergence, the H-divergence, and the Wasserstein distance \cite{ben01, ma17, kuroki2019, DBLP:conf/nips/KolouriNSBR19}. The strength of Wasserstein distance, compared with other metrics, is that it takes advantage of gradient superiority. The literature findings motivated our proposal of a metric based on Wasserstein distance to align the distribution of different views in a latent space globally.
\section{Problem definition}
Formally, we consider the multi-view dataset ${X^m} = \left[{x_1^m,x_2^m,...,x_N^m} \right]$, where $X^m$ represents the sample collection from the $m$-th view, and ${x_i^m}$, $i=1,...,N$, $m=1,...,M$ denotes the $i$-th sample of the $m$-th view. $N$ is the number of samples in the $m$-th view and $M$ is the number of views. We denote $x^m$ as a random variable sampled \textit{i.i.d} from the distribution $\mathcal{P}\left(x^m \right) $. Also, we denote ${x_i} = \left[ {x_i^1;x_i^2; \cdots; x_i^M} \right]$, which represents a complete sample that consists of the same samples from different views, $X = \left[ {{x_1},{x_2},\cdots,{x_N}} \right]$ represents a complete dataset, and $x$ presents a variable sampled \textit{i.i.d} from distribution $\mathcal{P}\left(x \right) $. The self-supervised multi-view learning aims to learn multiple encoders capable of extracting discriminative features for each view’s data in an unsupervised manner, so that it can better serve downstream tasks such as classification. Specifically, we first project all multi-view data into the latent space through multiple encoders to obtain their feature representations, each view corresponds to a encoder, e.g., $f^m$ for the $m$-th view. Then, a certain objective is minimized to the parameters of the $f^m$. In this paper, the objective minimized in our proposed method consists of two parts including the loss function of the global consistency module and the complementarity-aware contrastive loss function. We will introduce these two loss function in the next section.
\section{Method}
In this section, we present the proposed \textit{consistency and complementarity network} (CoCoNet) in detail, which aims at learning feature representations that can jointly model view-consistent factors and view-complementary factors from multi-view data. As shown in Figure \ref{fig:algoframe}, CoCoNet consists of two modules, i.e., the local complementarity module for capturing the view-complementarity information, and the global consistency module for acquiring the view-consistency information.

\subsection{Global consistency module} \label{sec:gloco}
The idea behind global consistency module is to learn a feature representation that globally captures information shared among multiple views. We first adopt the view-specific encoders to project the original inputs $\left\{ X^1, X^2, \cdots, X^M \right\}$ into the latent space. The resulting latent representations $\left\{ H^1, H^2, \cdots, H^M \right\}$ fit the distributions $\mathcal{P}\left(H^1 \right) $, $\mathcal{P}\left(H^2 \right), \cdots, \mathcal{P}\left(H^M \right) $. Then, we align $\mathcal{P}\left(H^1 \right) $, $\mathcal{P}\left(H^2 \right), \cdots, \mathcal{P}\left(H^M \right) $ in the latent space by minimizing the discrepancy among the distributions.

Wasserstein distance is widely used as the discrepancy measure. Let ${\mathcal{P}}\left( H \right)$ be the set of Borel probability measures. For ${{P_r},{P_g} \in \mathcal{P}\left( H \right)}$ and the corresponding support set ${\Sigma _r,\Sigma _g}$, respectively. Then, the $p$-th Wasserstein distance of the corresponding distributions is defined as:
\begin{equation}
	\label{Eq:ad}
	\begin{aligned}
		{W_p}\left( {{P_r},{P_g}} \right) = {\left( {\mathop {\inf }\limits_{\mu \left( {x_r,x_g} \right) \in \Pi \left( {x_r,x_g} \right)} \int {c{{\left( {x_r,x_g} \right)}^p}d\mu } } \right)^{\frac{1}{p}}},
	\end{aligned}
\end{equation}
where $x_r \in {\Sigma _r}, x_g \in {\Sigma _g}$, $c\left( {x_r,x_g} \right)$ denotes the distance of two patterns in ${\Sigma _r}, {\Sigma _g}$, and $\Pi \left( {x_r,x_g} \right)$ denotes the set of all joint distributions $\mu \left( {x_r,x_g} \right)$ that satisfies ${P_r} = \int_{x_g} {\mu \left( {x_r,x_g} \right)d{x_g}}, {P_g} = \int_{x_r} {\mu \left( {x_r,x_g} \right)d{x_r}}$.

Directly calculating ${W_p}\left( p_1 ,p_2 \right)$ is computationally expensive. An alternative is using the popular dual version to calculate it, yet the Lipschitz constraint is difficult to meet. Therefore, we use the generalized sliced Wasserstein distances (GSWD), which is proposed by \cite{DBLP:conf/nips/KolouriNSBR19}, to approximate ${W_p}\left( p_1 ,p_2 \right)$. GSWD is defined as:
\begin{equation}
	GSWD_p\left( {{P_r},{P_g}} \right) = \int_{{\Omega_\vartheta}} {{W_p}} \left( {GR_\vartheta P_r , GR_\vartheta P_g} \right)d\vartheta
\end{equation}
where ${\Omega_\vartheta}$ denotes a compact set of feasible parameters for $ GR_\vartheta $, $GR_\vartheta$ represents one-dimensional nonlinear projection operation, also denoted as the critic neural network. Therefore, due to the non-linearity of $ GR_\vartheta $, the GSWD is expected to capture the complex structure of high-dimensional distributions (see the details of $ GR_\vartheta $ in Appendix \ref{sec:nonlmap}).

From the perspective of the gradient, we analyze the superiority of adopting GSWD as the discrepancy metric compared with other metrics, e.g., Kullback-Leibler (KL) divergence, which is described in Section \ref{sec:advgswd}. Empirically, we further explore the improvement of adopting various divergences as the discrepancy metric in Section \ref{sec:diffdis}.

Concretely, the loss function of the global consistency module is defined as:
\begin{equation} \label{equ:gloco}
	{\mathcal{L}_{GloCo}}= \sum\limits_{i=1}^{M-1} \sum\limits_{j=i+1}^M {GSWD_p}\left( {\mathcal{P}\left( {{H^i}} \right),\mathcal{P}\left( {{H^j}} \right)} \right)
\end{equation}

We denote the network by GloCo if only the global consistency preserving module is employed.

\subsection{Local complementarity module} \label{sec:loco}

Each view may contain unique discriminative information complementary to other views, which cannot be captured by the conventional contrastive learning approach. The local complementarity module aims to encode such view-complementarity information from multiple views in an instance-based manner. To this end, we first extract a complementarity-factor and then maximize the MI between this factor and the latent features of each view. The pipeline of the local complementarity module is depicted in Figure\ref{fig:algoframe}.

We incorporate the local discriminative knowledge of all views into a complementarity-factor. Specifically, given a sample $x_i$ with \(M\)-views $\left\{ {{x_i^1},{x_i^2},\cdots,{x_i^M}} \right\}$, we first encode these views to obtain the low-level feature maps $\left\{ {{z_i^1},{z_i^2},\cdots, {z_i^M}} \right\}$ with $h \times w \times C_2$ dimensions by the view-specific feature extraction networks $\left\{ f_\omega^1, f_\omega^2, \cdots, f_\omega^M \right\}$, where $C_2$, $h$, and $w$ are the number of channels, height, and width of the low-level feature maps, respectively. Then, we map $\left\{ {{z_i^1},{z_i^2},\cdots, {z_i^M}} \right\}$ into $C_1$-dimensional high-level feature vectors $\left\{ {{h_i^1},{h_i^2}, \cdots, {h_i^M}} \right\}$ by the view-specific mapping networks $\left\{ f_\theta^1, f_\theta^2, \cdots, f_\theta^M \right\}$. We concatenate these high-level feature vectors to obtain a $M \cdot C_1$-dimensional syncretic feature vector $h_i$, which is reckoned to combine the shared information among high-level feature vectors $\left\{ {{h_i^1},{h_i^2}, \cdots, {h_i^M}} \right\}$. Then, the low-level feature maps are also concatenated to obtain a $h \times w \times M \cdot C_2$-dimensional syncretic feature map $z_i$, which is considered to capture the shared low-level information from $\left\{ {{z_i^1},{z_i^2},\cdots, {z_i^M}} \right\}$.

For the sake of combining both high-level and low-level information, we expand $h_i$ to a $h \times w \times M \cdot C_1$ feature map, and then concatenate it with the syncretic low-level feature map $z_i$ to obtain a $h \times w \times M \cdot (C_1+C_2)$ embedding. Then, we project this embedding to a $C_1$-dimensional feature vector, called complementarity-factor $CF_i$. For a sample $x_i$, we finally obtain \(M\) high-level vectors $\left\{ {{h_i^1},{h_i^2},\cdots, {h_i^M}} \right\}$ and a complementarity-factor $CF_i$.

$CF_i$ is expected to encode comprehensive and complementary information from different views, but may also contain redundant view-specific noises. To filter out such redundant information while maintain useful complementary information, we perform contrast learning to maximize the MI between the high-level features of different views and the complementarity-factor. In a minibatch with $n$ samples $\left[ {{x_1},{x_2},\cdots,{x_n}} \right]$, for each sample $x_i$, we regard $\left\{ {{h_i^1},{h_i^2},\cdots,{h_i^M}} \right\}$ of all its views and the complementarity-factor $CF_i$ as the positive terms, and $\left\{ {{h_j^1},{h_j^2},\cdots,{h_j^M}} \right\}$ and the corresponding complementarity-factors $CF_j$ of the other samples as the negative terms where $j \in \left\{ {1,...,n} \right\} \cap j \ne i$. 

Conventional contrastive loss \cite{vdo18} can be formulated:

\begin{equation} \label{equ:cl}
	{\mathcal{L}_{contrast}} = -\mathop{\mathbb{E}}_{X^n} {\left[\log \frac{{{S_\tau }\left(p \right)}}{{{S_\tau }\left( p \right) + \sum\limits_{j = 1}^k {{{S_\tau }\left( n_j \right)} } }}\right]}
\end{equation}
where $S_\tau$ is a score function to measure the positive pairs and the negative pairs, $p$ denotes the positive pair, $n_j$ denotes the negative pair sampled, and $k$ denotes the number of the sampled negative pairs.

We insert the complementarity-factor into the contrastive loss to generate a novel complementarity-aware contrastive loss by reformulating the Equation \ref{equ:cl} as follows:

\begin{equation}\label{equ:lococlori}
	\begin{aligned}
		{\mathcal{L}_{cf-contrast}} = - \alpha \cdot \mathop{\mathbb{E}}_{\widetilde{X}^n} {\left[\log \frac{{{S_\tau }\left(\widetilde{p} 	\right)}}{{{S_\tau }\left( \widetilde{p} \right) + \sum\limits_{j = 1}^k {{{S_\tau }\left( \widetilde{n}_j \right)} } }}\right]} \\- \beta \cdot \mathop{\mathbb{E}}_{X^n} {\left[\log \frac{{{S_\tau }\left(p \right)}}{{{S_\tau }\left( p \right) + \sum\limits_{j = 1}^k {{{S_\tau }\left( n_j \right)} } }}\right] }
	\end{aligned}
\end{equation}
where, $\widetilde{p}$ denotes the positive pair, which consists of a positive term, i.e., a high-level feature of view $h_i^m$ of the selected sample $x_i$, and the complementarity-factor $CF_i$. Also, $\widetilde{n}_j$ denotes the negative pair of a negative term and the according complementarity-factor. In order to further study the impacts of the two parts, we excessively set two hyper-parameters, i.e., $\alpha$ and $\beta$, to balance the two terms.

More specifically, the positive pair and negative pair are constructed as follows. For the complementarity-aware contrastive term, we group one of the positive terms, $\left\{ {{h_i^1},{h_i^2},\cdots,{h_i^M}} \right\}$ and $CF_i$, to form a positive pair. A negative term, $\left\{ {{h_j^1},{h_j^2},\cdots,{h_j^M}} \right\}$ and $CF_j$, are bonded as a negative pair. Then, expending the Equation \ref{equ:lococlori}, we formalize the proposed complementarity-aware contrastive loss function as follows:

\begin{equation} \label{equ:loco}
	\begin{aligned}
		{\mathcal{L}_{LoCo}} = - \alpha\cdot\sum\limits_{i = 1}^n {\sum\limits_{m = 1}^3 } {\log \frac{{{S_\tau }\left( {h_i^m, CF_i} \right)}}{{{S_\tau }\left( {h_i^m, CF_i} \right) + \sum\limits_{j = 1 \cap j \ne i}^n} {{S_\tau }\left( {h_i^m, CF_{j}} \right)} }} \\ - \beta\cdot\sum\limits_{i = 1}^n {\mathop{\sum\limits_{m = 1}^3 }{\sum\limits_{t = 1 \cap t \ne m}^3 } {\log \frac{{{S_\tau }\left( {h_i^m,h_i^t} \right)}}{{{S_\tau }\left( {h_i^m,h_i^t} \right) + \sum\limits_{j = 1 \cap j \ne i}^n} \ {\sum\limits_{q= 1}^3} {{S_\tau }\left( {h_i^m,h_{j}^{q}} \right)} }} }
	\end{aligned}
\end{equation}

${S_\tau }$ is the contrastive feature measurement function, which is implemented as:

\begin{equation}
	{S_\tau }\left( {a,b} \right) = \exp \left( {\frac{{\left\langle {\left( a \right),\left( b \right)} \right\rangle }}{{\left\| {\left( a \right)} \right\| \cdot \left\| {\left( b \right)} \right\|}} \cdot \frac{1}{\tau }} \right)
\end{equation}
where $a$ and $b$ denote the input high-level feature vectors, $\left\langle {,} \right\rangle $ is the inner product operator, $\left\| {} \right\|$ is the \(L_2\)-norm, $\tau$ is the fixed temperature coefficient. We denote the local complementarity module as LoCo.

\subsection{Consistency and complementarity network}
As shown in Figure \ref{fig:algoframe}, our proposed CoCoNet incorporates the local consistency module and the global consistency module. Overall, the loss for the proposed CoCoNet is the weighted sum of the losses for the two modules:

\begin{equation}
	\label{equ:coco}
	\mathcal{L}_{CoCo}={\mathcal{L}_{LoCo}} + {\gamma \cdot \mathcal{L}_{GloCo}}
\end{equation}
where $\gamma$ is the coefficient that controls the balance between $\mathcal{L}_{LoCo}$ and $\mathcal{L}_{GloCo}$. By substituting Equation \ref{equ:gloco} and Equation \ref{equ:loco} into Equation \ref{equ:coco}, the objective is formulated as follows:

\begin{equation}
	\label{equ:cocoobj}
	\begin{aligned}
		&\mathop {min}\limits_{{f_\omega },{f_\theta }} \Bigg\{ - \alpha\sum\limits_{i = 1}^n {\sum\limits_{m = 1}^3 } {\log \frac{{{S_\tau }\left( {h_i^m, CF_i} \right)}}{{{S_\tau }\left( {h_i^m, CF_i} \right) + \sum\limits_{j = 1 \cap j \ne i}^n} {{S_\tau }\left( {h_i^m, CF_{j}} \right)} }} \\ &\underbrace{ - \beta\sum\limits_{i = 1}^n {\mathop{\sum\limits_{m = 1}^3 }{\sum\limits_{t = 1 \cap t \ne m}^3 } {\log \frac{{{S_\tau }\left( {h_i^m,h_i^t} \right)}}{{{S_\tau }\left( {h_i^m,h_i^t} \right) + \sum\limits_{j = 1 \cap j \ne i}^n} \ {\sum\limits_{q= 1}^3} {{S_\tau }\left( {h_i^m,h_{j}^{q}} \right)} }} }}_{LoCo} \\ &\qquad \quad \, + \gamma \underbrace{{ \sum\limits_{i=1}^2 \sum\limits_{j=i+1}^3 {GSWD_p}\left( {\mathcal{P}\left( {{H^i}} \right),\mathcal{P}\left( {{H^j}} \right)} \right)}}_{GloCo} \Bigg\}
	\end{aligned}
\end{equation}

Minimizing the first two terms of equation \ref{equ:cocoobj}, i.e., the loss of LoCo, can guide the multi-view representations to model more discriminative local view-complementarity information, and minimizing the last term of equation \ref{equ:cocoobj}, i.e., the loss of GloCo, can globally make the representations consistent. We conduct experiments to study the influence of $\alpha$, $\beta$, and $\gamma$, respectively, which is manifested in Section \ref{sec:deepgoing}.

Following \cite{ylt20}, we maintain a memory bank to store latent features for each training sample. In addition, we build an extra memory bank for efficiently retrieving complementarity-factor features on the fly. We elaborate the training pipeline in Algorithm \ref{alg:CoCoNet}, and the code is available at \url{https://github.com/jiangmengli/CoCoNet}.

\begin{algorithm}[t]
	\caption{CoCoNet}
	\label{alg:CoCoNet}
	\vskip 0.in
	\begin{algorithmic}
		\STATE {\bfseries Input:} Multi-view dataset ${X^m} = \left[{x_1^m,x_2^m,...,x_N^m} \right]$, minibatch size $n$, critic network training steps $s$, the learning rates ${\ell_\omega}$ and ${\ell_\theta}$ for the view-specific feature extractors ${f_\omega}$ and mapping networks ${f_\theta}$, the learning rate ${\ell_{critic}}$ for the critic network ${GR_\vartheta}$, and hyperparameters $\alpha$, $\beta$, $\gamma$.\\
		\STATE {\bf Initialize} ${\ell_\omega}$, ${\ell_\theta}$, ${\ell_{critic}}$, ${f_\omega}$, ${f_\theta}$, and ${GR_\vartheta}$.
		\REPEAT
		\STATE Sample minibatch $\left\{ {x_i} \right\}_{i = 1}^{n} \in {X^m}$.
		\FOR{$t=1$ {\bfseries to} $s$}
		\STATE \# Fix ${f_\omega}$ and ${f_\theta}$ \\ \# Train the critic network ${GR_\vartheta}$ to get \textit{max} GSWD
		\STATE ${\vartheta} \leftarrow {\vartheta} - {\ell_{critic}}\cdot \Delta_{\vartheta}{\left(-\mathcal{L}_{GloCo}\right)}$
		\ENDFOR
		\STATE \# Fix ${GR_\vartheta}$ and train ${f_\omega}$ and ${f_\theta}$
		\STATE ${\omega} \leftarrow {\omega} - {\ell_{\omega}}\cdot \Delta_{\omega}{\left(\mathcal{L}_{LoCo} + \gamma \cdot \mathcal{L}_{GloCo}\right)}$ \\ ${\theta} \leftarrow {\theta} - {\ell_{\theta}}\cdot \Delta_{\theta}{\left(\mathcal{L}_{LoCo} + \gamma \cdot \mathcal{L}_{GloCo}\right)}$ \\
		\UNTIL ${f_\omega}$, ${f_\theta}$ converge.
	\end{algorithmic}
	\vskip -0.in
\end{algorithm}

\section{Theoretical analyses} \label{sec:ta}

In this section, we analyze the proposed CoCoNet from the information-theoretical perspective, and we also provide the theoretical analysis about the advantages of using the generalized sliced Wasserstein distance as the selected metric.

\subsection{The information-theoretical analysis of CoCoNet}
{\bf{Notation.}} Figure \ref{fig:infotheory} demonstrates a visual illustration of CoCoNet by using information theoretical description. We regard the input random variable as $X$ and another view of $X$ as $\hat{X}$ in the figure, e.g., $X = X^1$ and $\hat{X} = X^2$. $T$ presents the downstream task-relevant information. $Y^*$ is the optimal representation learned from the deterministic encoder $f_{\vartheta, \omega }(\cdot)$, i.e., ${f_\vartheta }\left( {{f_\omega }\left(\cdot\right)}\right)$ that includes the feature extraction network ${f_\omega }$ and the mapping network ${f_\vartheta}$. For random variables $A$, $B$, and $C$, $H(A)$ denotes the entropy of $A$, and $H(A|B)$ denotes the conditional entropy of $H(A)-H(B)$. Accordingly, $I(A;B)$ presents the MI of $A$ and $B$, and $I(A;B|C)$ represents the conditional MI of $I(A;B) - H(C)$. 

To clarify the information diagrams of CoCoNet, we detail the definitions as follows:

\begin{definition}
	\label{def:1}
	View-Consistency information is the discriminative information that is shared among views.
\end{definition}

\begin{definition}
	\label{def:2}
	View-Complementarity information is the task-relevant information that is view-specific.
\end{definition}

\begin{definition}
	\label{def:3}
	View-Specific Noise is the task-irrelevant information that only exists in one specific view.
\end{definition}

Considering the Definitions \ref{def:1}, \ref{def:2}, and \ref{def:3}, we rewrite the common multi-view assumption \cite{2008Sridharan, 2013Xu} to describe multi-view learning between multiple views:

\begin{assumption}
	\label{ass:1}
	(Multi-view, rewriting Assumption 1 in work \cite{2008Sridharan}). The different views are approximately redundant to each other for the task-relevant information, based on \ref{def:2}, which is the View-Complementarity information, denoted as $\epsilon^{complementarity}$. For the View-Complementarity information of each view, we have $I(X^i;T|X^j) \leq \epsilon^{complementarity}$ with $i, j \in \{{1, ..., m}\} \cap i \ne j$.
\end{assumption}

Assumption \ref{ass:1} states that, for $\epsilon^{complementarity}$, when it is small, the task-relevant information mainly lies in the MI between the input and the self-supervised signal. Therefore, when the number of views, $m$, is not large, as $m$ increases, $I(X^i;T|\{X^j\}_{j=1 \cap j \ne i}^{m})$ gets more compressed, and the ratio of discriminative task-relevant information rise, since the constraints of the MI, $\{\max_{i,j = 1 \cap i \ne j}^m{I(X^i;X^j)}\}$ become stronger. Accordingly, the latent representation is more discriminative, which is supported by the view-vanishing experiment (See Section \ref{sec:deepgoing}).

\begin{definition}
	\label{def:4}
	(Consistent and Complementary Multi-view Representations for Self-supervision). Let $Y$ denotes the initially learned multi-view representation and $Y^*$ denotes the consistent and complementary multi-view representation with restricted view-shared and view-specific discriminative knowledge: $Y^*=\mathop{\text{argmax}} \limits_{Y}I(Y;\{X^i\}_{i=1}^m;T)$ s.t. $I(Y;\{X^i\}_{i=1}^m)$ is maximized.
\end{definition}

To learn the view-consistent and view-complementary multi-view representations $Y^*$ from the multiple views $\{X^i\}_{i=1}^m$, different from previous works that roughly maximize the MI $I(Y;X^1;X^2)$ in a multi-view manner by utilizing the conventional contrastive learning framework, we globally align the distribution of view to guide the encoders to model view-shared information by availing of the efficient generalized sliced Wasserstein distance, which, based on the Definitions \ref{def:1} and \ref{def:3}, is defined as:

\begin{theorem}
	\label{the:1}
	(View-Consistency information with a potential loss of View-Specific Noise $\epsilon^{noise}$). $Y^*$ is the sufficiently compressed latent representation, while $Y$ is the self-supervised representation with part of the view-specific but task-irrelevant information $\epsilon^{noise}$. Formally, only considering two views, i.e., $X^1$ and $X^2$, $I(X^1;Y) \geq I(X^1;Y|\epsilon^{noise}) = I(X^1;Y^*) = I(X^1;X^2)$.
\end{theorem}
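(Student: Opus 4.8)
The plan is to prove the displayed chain in three pieces: the inequality $I(X^1;Y)\ge I(X^1;Y\mid\epsilon^{noise})$, then the equality $I(X^1;Y\mid\epsilon^{noise})=I(X^1;Y^*)$, and finally $I(X^1;Y^*)=I(X^1;X^2)$. The common setup is that the encoder $f_{\vartheta,\omega}$ is deterministic, so $\epsilon^{noise}$ — the view-specific, task-irrelevant part that $Y$ retains but the compressed optimum $Y^*$ discards — is a measurable function of $Y$, while $Y$ is recoverable from the pair $(Y^*,\epsilon^{noise})$; in information-diagram terms the two pieces partition the entropy of $Y$.

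For the inequality I would apply the chain rule to $I(X^1;Y,\epsilon^{noise})=I(X^1;\epsilon^{noise})+I(X^1;Y\mid\epsilon^{noise})$; since $\epsilon^{noise}$ is a function of $Y$ the left-hand side is $I(X^1;Y)$, so $I(X^1;Y)-I(X^1;Y\mid\epsilon^{noise})=I(X^1;\epsilon^{noise})\ge 0$, which is the claim and also exhibits the gap as the information that $X^1$ shares with the discarded noise.

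For the first equality, conditioning on $\epsilon^{noise}$ together with $Y\equiv(Y^*,\epsilon^{noise})$ gives $I(X^1;Y\mid\epsilon^{noise})=I(X^1;Y^*\mid\epsilon^{noise})$, so it remains to drop the conditioning. Here I would invoke Definition \ref{def:3}: $\epsilon^{noise}$ is by construction the purely view-specific, task-irrelevant residue, modelled as statistically independent of the compressed discriminative content $Y^*$ and of the portion of $X^1$ reflected by $Y^*$; this independence makes the conditional and unconditional mutual informations equal. I expect this step to be the main obstacle, since it is exactly where the informal ``signal-plus-noise'' decomposition of $Y$ must be converted into a genuine independence statement — the rest is bookkeeping with standard identities.

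For the last equality I would use that $Y^*$ is produced by the global consistency module, whose loss drives $GSWD_p(\mathcal{P}(H^1),\mathcal{P}(H^2))$ to zero; in the resulting idealized optimum the view-specific encoders output matched codes, so $Y^*$ is simultaneously a deterministic function of $X^1$ and, as a random variable, of $X^2$. The data-processing inequality along $X^1 - X^2 - Y^*$ then gives $I(X^1;Y^*)\le I(X^1;X^2)$, while the maximality requirement in Definition \ref{def:4} (that $I(Y;\{X^i\}_{i=1}^m)$ be maximal, i.e.\ $Y^*$ keeps all shared information) gives the reverse inequality; together they yield equality. Concatenating the three pieces produces $I(X^1;Y)\ge I(X^1;Y\mid\epsilon^{noise})=I(X^1;Y^*)=I(X^1;X^2)$, as claimed.
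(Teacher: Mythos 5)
Your argument is essentially correct under the same idealizations the paper itself relies on, but it follows a genuinely different route. The paper's own proof only establishes the inequality $I(X^1;Y)\ge I(X^1;Y|\epsilon^{noise})$: it rewrites each mutual information as a KL divergence, $I(X^1;Y)=D_{KL}(\mathcal{P}_{X^1Y}\|\mathcal{P}_{X^1}\mathcal{P}_Y)$, interprets conditioning on $\epsilon^{noise}$ as literally summing over the reduced support $\{Y-\epsilon^{noise}\}$, and argues heuristically that after removing the noise the joint and the product of marginals are ``more consistent,'' hence the KL term shrinks. The two equalities $I(X^1;Y|\epsilon^{noise})=I(X^1;Y^*)=I(X^1;X^2)$ are not argued at all in the appendix; they are treated as holding by construction, which is consistent with the paper's non-standard conventions ($H(A|B)$ and $I(A;B|C)$ are ``defined'' by subtracting information regions), so that $Y$ with its noise removed \emph{is} $Y^*$, and the sufficiently compressed $Y^*$ retains exactly the shared information. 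Your chain-rule derivation, $I(X^1;Y)=I(X^1;Y,\epsilon^{noise})=I(X^1;\epsilon^{noise})+I(X^1;Y|\epsilon^{noise})$ with $\epsilon^{noise}$ a function of $Y$, is cleaner and strictly more informative: it is rigorous in standard information-theoretic terms and exhibits the gap as $I(X^1;\epsilon^{noise})$, whereas the paper's ``more consistent'' step is essentially a restatement of the claim. For the equalities you correctly identify the real obstruction: dropping the conditioning needs the vanishing of the interaction term $I(X^1;Y^*;\epsilon^{noise})$, i.e.\ an independence-type assumption about the noise that the paper never makes explicit, and your last step additionally idealizes GSWD-alignment of the \emph{marginal} distributions into $Y^*$ being a function of $X^2$ (marginal matching alone does not give this), before combining the data-processing inequality with the maximality clause of Definition \ref{def:4}. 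So your proposal buys a standard-notation, auditable proof of the inequality and an honest accounting of which extra assumptions the equalities require, while the paper's version buys brevity by building those equalities into its informal information-diagram notation.
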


\begin{proof}
	See Proof \ref{pro:1} in Appendix \ref{sec:proof} for details.
\end{proof}

Based on Theorem \ref{the:1}, we propose an implicit View-Consistency preserving regularization, which is approximated by the global consistency network in Section \ref{sec:gloco}.

\begin{theorem}
	\label{the:2}
	(View-Complementarity information, which is view-specific and task-relevant). $Y$ is the sufficiently compressed latent representation, and $Y^*$ is the latent representation of adding view-specific and task-relevant information into $Y$. Formally, considering $X^1$ and $X^2$, $I(X^1;Y;T) = I(X^2;Y;T) = I(Y;T) \leq I(Y;T) + I(X^1;Y^*;T|X^2) + I(X^2;Y^*;T|X^1) = I(Y^*;T)$.
\end{theorem}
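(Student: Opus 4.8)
The plan is to reason in the information-diagram (Venn-diagram) picture of the variables $X^1,X^2,T,Y,Y^*$, using two structural facts supplied by the setup. First, since $Y$ is the \emph{sufficiently compressed} representation, Theorem~\ref{the:1} together with Definition~\ref{def:1} identifies the $T$-relevant content of $Y$ with the view-shared atom $I(X^1;X^2)$; concretely this means $I(Y;T\mid X^1)=I(Y;T\mid X^2)=0$. Second, by the construction in Definition~\ref{def:4}, $Y^*$ is obtained from $Y$ by adding \emph{only} view-specific, task-relevant information, so $Y^*$ is a deterministic function of $(X^1,X^2)$ and it agrees with $Y$ on everything lying inside $X^1\cap X^2$.

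I would first establish the chain $I(X^1;Y;T)=I(X^2;Y;T)=I(Y;T)$. By the definition of the interaction information, $I(X^1;Y;T)=I(Y;T)-I(Y;T\mid X^1)$ and $I(X^2;Y;T)=I(Y;T)-I(Y;T\mid X^2)$; substituting the two vanishing conditional terms from the first structural fact gives both equalities and, as a by-product, shows $I(Y;T)=I(X^1;X^2;Y;T)$, since the overlap of $Y$ and $T$ then sits inside $X^1\cap X^2$.

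Next I would prove the identity $I(Y^*;T)=I(Y;T)+I(X^1;Y^*;T\mid X^2)+I(X^2;Y^*;T\mid X^1)$, from which the theorem's final equality follows. Split the atom $Y^*\cap T$ according to membership in $X^1$ and $X^2$ into four disjoint contributions whose sum is $I(Y^*;T)$: the part inside both views, $I(X^1;X^2;Y^*;T)$; the parts inside exactly one view, $I(X^1;Y^*;T\mid X^2)$ and $I(X^2;Y^*;T\mid X^1)$; and the part outside both, $I(Y^*;T\mid X^1,X^2)$. The last vanishes because $Y^*=f(X^1,X^2)$ gives $I(Y^*;T\mid X^1,X^2)\le H(Y^*\mid X^1,X^2)=0$. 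The first equals $I(Y;T)$: since $Y^*$ coincides with $Y$ on $X^1\cap X^2$, we get $I(X^1;X^2;Y^*;T)=I(X^1;X^2;Y;T)=I(Y;T)$ by the previous step. For the inequality $I(Y;T)\le I(Y^*;T)$ I would then check that the two one-view terms are non-negative: writing $I(X^1;Y^*;T\mid X^2)=I(X^1;T\mid X^2)-I(X^1;T\mid X^2,Y^*)$ and using $Y^*=f(X^1,X^2)$, the chain rule for $I(X^1;(T,Y^*)\mid X^2)$ together with $H(Y^*\mid X^2,T)\le H(Y^*\mid X^2)$ forces $I(X^1;T\mid X^2,Y^*)\le I(X^1;T\mid X^2)$, hence $I(X^1;Y^*;T\mid X^2)\ge 0$, and symmetrically for the other term.

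The main obstacle is the bookkeeping in the third step: $I(X^1;X^2;Y^*;T)$ and the conditional co-informations are multivariate mutual informations, which need not be non-negative for arbitrary random variables, so the Venn-diagram accounting is licensed only because $Y$ and $Y^*$ are deterministic functions of the views and are, by the modeling assumptions, supported exactly on the view-consistency atom (respectively that atom plus the two view-specific task-relevant atoms). I would therefore discharge every vanishing or sign claim by an explicit deterministic-function / data-processing argument as above, and state the result as resting on Theorem~\ref{the:1} and Definitions~\ref{def:1}--\ref{def:4} rather than on properties of general distributions.
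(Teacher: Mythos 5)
Your proposal is correct in substance but takes a genuinely different route from the paper's own proof. The paper only targets the middle inequality: it transposes it to $I(X^1;Y^*;T|X^2)+I(X^2;Y^*;T|X^1)\ge 0$, argues informally that $Y^*$ ``can contain part of'' the complementarity information so that it suffices to show $I(X^1;T|X^2)\ge 0$ and $I(X^2;T|X^1)\ge 0$, and establishes those by rewriting mutual information as a discrete KL divergence (hence non-negative); the flanking equalities $I(X^1;Y;T)=I(X^2;Y;T)=I(Y;T)$ and $I(Y;T)+I(X^1;Y^*;T|X^2)+I(X^2;Y^*;T|X^1)=I(Y^*;T)$ are not derived but absorbed into the modeling narrative. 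You instead prove the whole chain: the left equalities from $I(Y;T|X^1)=I(Y;T|X^2)=0$ (sufficient compression), the right equality from the four-atom decomposition of $I(Y^*;T)$ with $I(Y^*;T|X^1,X^2)=0$ because $Y^*$ is a deterministic function of the views, and the sign of the complementarity terms from $I(X^1;Y^*;T|X^2)=I(Y^*;T|X^2)-I(Y^*;T|X^1,X^2)=I(Y^*;T|X^2)\ge 0$. This last point is a real improvement: co-information is not non-negative in general, and the paper's leap from $I(X^1;T|X^2)\ge 0$ to $I(X^1;Y^*;T|X^2)\ge 0$ (``because the mutual information cannot be negative'') is exactly the step your determinism/data-processing argument makes sound. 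What your route buys is rigor at the points where interaction information misbehaves, at the price of invoking structural assumptions ($Y$ and $Y^*$ deterministic in the views, $Y^*$ agreeing with $Y$ on the shared atom) that the theorem statement leaves informal; what the paper's route buys is brevity and a self-contained KL-based non-negativity argument, but it proves strictly less than the stated chain. The one step of yours that remains at the paper's level of informality is the identification $I(X^1;X^2;Y^*;T)=I(X^1;X^2;Y;T)$, which rests on the Venn-diagram reading of Definition~\ref{def:4} rather than a stated hypothesis — acceptable here, since the paper's own hypotheses are no more formal.
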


\begin{proof}
	See Proof \ref{pro:2} in Appendix \ref{sec:proof} for details.
\end{proof}

By the same token, with the intuition of Theorem \ref{the:2}, we introduce an implicit View-Complementarity preserving regularization and implement it by the local complementarity network in Section \ref{sec:loco}.


\subsection{The gradient advantages of the generalized sliced Wasserstein distance} \label{sec:advgswd}
In order to align the distributions of views, we minimize the divergence between distributions \{$\mathcal{P}\left(X^1 \right), \mathcal{P}\left(X^2 \right), ..., \mathcal{P}\left(X^m \right)$\}. To this end, we map data into a common latent space and then measure the distance based on a specific discrepancy metric, which reduces the dimensionality of representations in latent space. The representation's wide distribution may exist throughout the latent space. Then, for the conventional discrepancy metric (e.g., KL-divergence), the data points located in a region where the probability of a certain distribution is extremely greater than other distributions have little contribution to the gradient with cross-entropy loss. At the same time, the generalized sliced Wasserstein distance can provide stable gradients for every data point. Learning from \cite{nh10, ma17}, we found that there will be a gradient vanishing problem if making data indistinguishable based on the conventional discrepancy metric in the case of the distributions has supports lying on low dimensional manifolds in the latent space. Also, we can get stable gradients by adopting the generalized sliced Wasserstein distance. Theoretically, consistent performance is achievable by using the generalized sliced Wasserstein distance.

\section{Experiments}

In this section, we compared the proposed method against a fully-supervised classifier similar to the Alexnet architecture and various benchmark unsupervised methods to evaluate the performance of CoCoNet. To comprehensively evaluate the performance of the propose method, we imposed CoCoNet on \textit{four} major downstream tasks: 1) benchmark image classification; 2) benchmark graph prediction; 3) benchmark action recognition; 4) practical object detection.

\subsection{Preparation}
We conducted experiments on neural network methods (i.e., the convolutional (conv) neural network-based method and the fully-connected (fc) network-based method) on benchmark datasets. Furthermore, we studied the performance of the ablation models of CoCoNet in conducted experiments, and we took CIFAR10 as the target dataset for the deepgoing exploration.

For setting the ablation study of CoCoNet, we compared with two main ablation models: GloCo, and LoCo. In details, CoCoNet refers to the complete model that considering the global consistency preserving, and the local complementarity preserving (i.e., $\alpha$ = 1, $\beta$ = 0.5, $\gamma$ = ${10^{ - 4}}$). GloCo refers to an ablation model of CoCoNet by removing the local complementarity preserving module (i.e., $\alpha$ = 0, $\beta$ = 0, $\gamma$ = ${10^{ - 4}}$). Since GloCo only employs the global consistency module, it can be treated as a view-alignment method. Therefore, GloCo can also be applied to conventional SSL methods. We combined GloCo and conventional self-supervised methods, e.g., GloCo+SwAV\cite{cm20} and GloCo+CMC\cite{ylt20}, to evaluate the performance of GloCo, where GloCo serves as a trimmer to align the feature distribution of different views. The features for each view are generated by these SSL methods. Similarly, LoCo refers to the model with only the local complementarity module (i.e., $\alpha$ = 1, $\beta$ = 0.5, $\gamma$ = 0).

\begin{figure}
	\vskip 0in
	\begin{center}
		\centerline{\includegraphics[width=1.0\columnwidth]{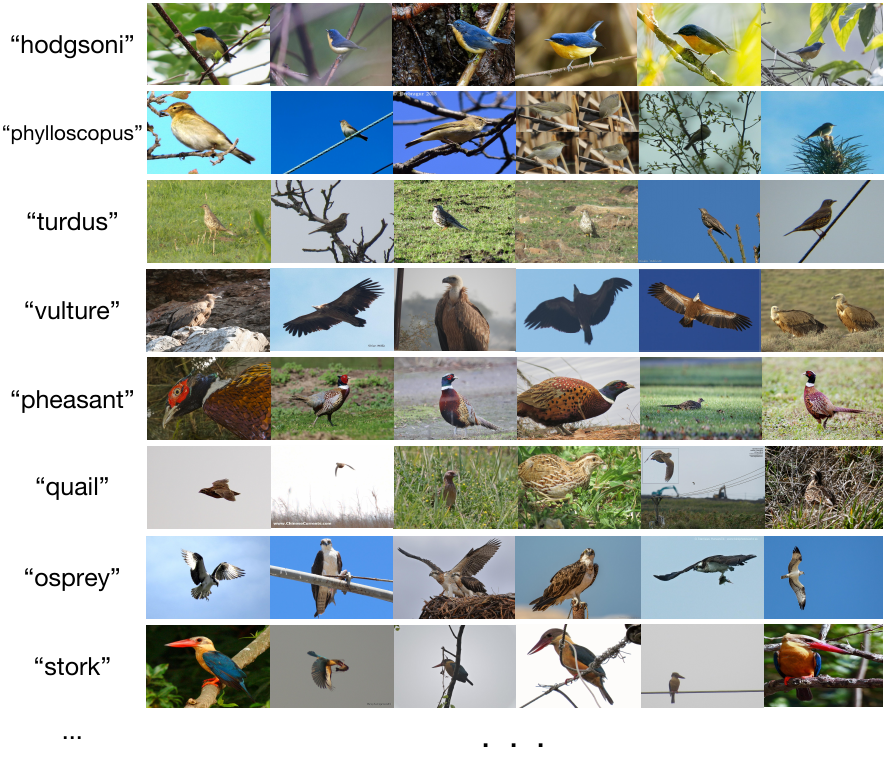}}
		\vskip -0.05in
		\caption{Example images in the WHEBD-759-SIM dataset.}
		\label{fig:whebdexample}
	\end{center}
	\vskip -0.37in
\end{figure}

\subsection{Datasets} \label{sec:datasets}

\subsubsection{Benchmark image classification datasets}
We conducted experiments on 5 established image representation learning datasets:

\textbf{CIFAR10 dataset} \cite{ka09} is a small-scale labeled dataset composed of 32 × 32 images with 10 classes. CIFAR10 has 45,000 examples for training the classifier, and 5,000 examples for testing.

\textbf{CIFAR100 dataset} \cite{ka09} is a small-scale labeled dataset consisting of 32 × 32 images of 100 categories, and each category contains 500 examples for training the classifier and 100 examples for testing.

\textbf{STL-10 dataset} \cite{ac11} is a dataset derived from ImageNet composed of 96 × 96 images, and it contains a mixture of 100,000 unlabeled training examples and 500 labeled examples per class.

\textbf{Tiny ImageNet dataset} \cite{ka09} is a reduced version of ImageNet ILSVRC \cite{ka12}, and images are scaled down to 64 × 64 with 200 classes. Each class has 500 images. The test set contains 10,000 images.

\textbf{ImageNet dataset} \cite{ffl09} consists of 1,000 image classes and is frequently considered as a testbed for unsupervised representation learning algorithms.

\subsubsection{Benchmark graph prediction datasets}
We conducted experiments on 5 established benchmark molecule prediction downstream tasks: molesol (mole), mollipo (moll), molbbbp (molb), moltox21 (molt) and molsider (mols) from \textbf{Open Graph Benchmark (OGB)} \cite{hu2020open}.

\subsubsection{Benchmark action recognition datasets}
We conducted experiments on 2 established video datasets:

\textbf{UCF-101 dataset} \cite{2012UCF101} is a dataset for realistic action videos, providing 13,320 videos from 101 action categories.

\textbf{HMDB-51 dataset} \cite{2011HMDB} contains 6,849 samples, divided into 51 categories, each category contains at least 101 samples.

\subsubsection{Practical object detection datasets}
We introduced CoCoNet on a practical bird detecting and classifying dataset to validate the effectiveness of the proposed method under real-world circumstances, as follows:


\textbf{Waterfowl and Habitat of Earth Big Data dataset (WHEBD)} is a real-world and extensible dataset, which is automatically updated in cycles. We truncated WHEBD-759-2020 (WHEBD-759) from the complete WHEBD. An example of WHEBD-759-SIM is demonstrated in Figure \ref{fig:whebdexample}, and the derived WHEBD-759 consists of 759 image classes and 699,815 samples in total.

\subsection{Implementations} \label{sec:implementation}

\subsubsection{Setup of benchmark image classification}
In the training, we used fixed hyper-parameters and a batch size of 128. In the test, we built conventional classifiers on the high-level vector representations extracted from the previous conv or fc network and then evaluated the performance of models by averaging the results of the last 40 epochs of optimizations. Here, we selected three views: the Red-Green-Blue (RGB) view of the original image, the luminance channel (L) view, and the ab-color channel (ab) view. Hence, in order to improve the discriminability of the learned features, we adopted a series of data augmentation methods: color jittering, random grayscale, and random cropping. We uniformly adopted the MI estimator based on Noise-Contrastive Estimation \cite{rdh19, ylt20}.

In the comparisons of Table \ref{tab:a}, \ref{tab:inclf}, and \ref{tab:f1measure}, the encoder function $f_{\vartheta, \omega }$, i.e., ${f_\vartheta }\left( {{f_\omega }\left(\cdot\right)}\right)$ that includes the feature extraction network ${f_\omega }$ and the mapping network ${f_\vartheta}$, is approximated by a designed Alexnet \cite{ka12} for the classification tasks. Inspired by the backbone splitting setting of SplitBrain \cite{rz17}, we evenly split the Alexnet into sub-networks across the channel dimension and then used the sub-networks as encoders. According to the principle of building the encoders, the Alexnet is split across the channel dimension with a conjecture that split-Alexnet can also perform well in learning representations between views, and the split-Alexnet only has the halved learnable parameters \cite{rz17}. We, therefore, built the Alexnet with five convolutional layers (attached with additional batchnorm layers, ReLU activation functions, and corresponding maxpool functions), two linear layers (with corresponding batchnorm layers and ReLU activation functions), and a fully connected layer followed by an l2 normalization function, which is to tackle the problem of distribution drift. Then the split-Alexnets (i.e., the sub-networks) are served as the encoders. In the experiments, we used the convolutional (conv) neural network and the fully-connected (fc) network as the encoders, which use respectively the layers of Alexnet as the encoders, i.e., conv has 5 convolutional layers and one fully connected layer, and fc is the complete Alexnet. For the sake of further exploring the influence of the network architecture on the performance of CoCoNet and the ablation models, we chose conv$n$ as the backbone network, where $n$ denotes the number of the convolutional layers. The comparisons on the large-scale ImageNet \cite{ffl09} are shown in Table \ref{tab:inclf}.

In addition, we conducted extended experiments on the CIFAR10 and STL-10 datasets, and Table \ref {tab:c} depicts the results. We followed the experimental settings of the CPC and DIM experiment \cite {rdh19}, and then a strode crop architecture \cite{vdo18} (i.e., eight × eight crops with four × four strides on the CIFAR10 dataset, and 16 × 16 crops with eight × eight strides on the STL-10 dataset) is adopted. We chose ResNet-50 \cite{kh16} architecture as the encoder $f_{\vartheta, \omega }$, and the same classifier as in Table \ref {tab:a} is used, which is conducted to study the performance of CoCoNet based on a different backbone network.

For a fair comparison, all benchmark datasets use backbone encoders without pretraining. In training, we held the perspective that the representations learned the crucial features of views through different encoders. Then, we directly concatenated representations layer-wise from the encoders into one to achieve the ultimate representation of an input sample. The classifier's development leverages a basic Multi-Layer Perception network (MLP) followed by the softmax output function. All downstream classification tasks are subject to the classifiers (i.e., the linear networks) on the high-level vector representations extracted from the designed encoders. For building the discrepancy metric calculation critic network based on generalized sliced Wasserstein distance (i.e., the critic network), the discrepancy metric of CoCoNet measures the differences between views in the learned latent space. Hence, the critic network is designed based on MLP \cite{1943mws}.

\subsubsection{Setup of benchmark graph prediction}
To evaluate our method on self-supervised graph classification and regression tasks. We combined CoCoNet with GraphCL \cite{you2020graph}, which is a benchmark graph contrastive learning method. In detail, given a graph $G = \{G_i | i \in N\}$, two augmented graphs $\widetilde{G}^1_i, \widetilde{G}^2_i: \widetilde{G}_i \thicksim aug\left(\widetilde{G}_i | G_i\right)$ are generated by random graph augmentations \cite{you2020graph, suresh2021adversarial}. $G_i$,  $\widetilde{G}^1_i$, and $\widetilde{G}^2_i$ are treated as three views of of a graph. We compared CoCoNet with 4 baselines, and the reason we selected such baselines is that the experimental results of InfoGraph \cite{sun2020infograph} and GraphCL \cite{you2020graph} show that they achieve the state-of-the-art and outperform graph kernel and network embedding approaches \cite{kriege2020survey, grover2016node2vec, adhikari2018sub2vec, yanardag2015deep, narayanan2017graph2vec, shervashidze2011weisfeiler}. We followed the experimental protocol of GraphCL and AD-GCL. The average classification accuracy and standard deviation of the test results over the 20 runs are reported in Table \ref{tab:graphprediction}. For a fair comparison, baselines and our methods adopt GIN as the encoder and use a downstream linear classifier or regressor with the same hyperparameters. To perform the ablation study, we constructed LoCo and two GloCo variants by combining GloCo with GraphCL and AD-GCL.

\subsubsection{Setup of benchmark action recognition}
We combined CoCoNet with CMC \cite{ylt20}, and then evaluate the performance of the combination variant on benchmark action recognization tasks by following the experimental setting of \cite{ylt20, 2007Christopher}. We trained our methods on UCF-101 \cite{2012UCF101} by using CaffeNets \cite{2017AlexImagenet} to learn features from images and optical flows. Two streams are applied in the method: 1) the ventral stream, which performs object recognition and connects the target frame (image) of a video stream with a neighbouring frame; 2) the dorsal stream, which processes motion and associates the target frame to optical flow (centered at the target frame) in video data. In the training, we adopted both ventral and dorsal streams, which can be treated as two views, and the target frame (image) in a video stream is the third view. In the test, the compared methods are tested on UCF-101 to evaluate the \textit{task} transferability and on HMDB-51 \cite{2011HMDB} to evaluate the \textit{task} and \textit{dataset} transferability. It is worthy to note that, we performed CoCoNet based on the reimplemented CMC, i.e., CMC$^\ast$.

\subsubsection{Setup of practical object detection}
In a real application, many newly added samples of CWD are unlabeled, and only a few samples are labeled per category on the procedure. Therefore, we introduced the proposed self-supervised method to enhance the performance of the main detection and classification models, e.g., B-CNN \cite{2015Lin} and Faster RCNN \cite{2017Ren}, in a semi-supervised manner.

For the experimental settings, we trained the main models by fully utilizing the labeled samples of WHEBD-759 as the control group (the results are manifested in the first 2 rows of Table \ref{tab:whebd}), which is denoted as $X=\{(x_1, y_1), (x_2, y_2), ..., (x_{|X|}, y_{|X|})\}$. In order to simulate the real scene of this application, we generated a general simulated dataset, i.e., WHEBD-759-SIM, from the original labeled dataset, where only a quarter of the labeled data is retained for each category, and the labels of the remaining data are discarded. Hence, WHEBD-759-SIM consists of a labeled set $L=\{(x^L_1, y^L_1), (x^L_2, y^L_2), ..., (x^L_{|L|}, y^L_{|L|})\}$, and a unlabeled set $U=\{x^U_1, x^U_2, ..., x^U_{|U|}\}$. We trained plain main models on WHEBD-759-SIM as another compared group. For the sake of taking advantage of the unlabeled data of $U$, we introduced the SSL methods, as the auxiliary methods, into the main models to form the integral semi-supervised learning methods. In details, the alternative SSL methods includes our proposed CoCoNet and the state-of-the-art self-supervised methods, for instance, SimCLR \cite{tc20}, SwAV \cite{cm20}, and CMC \cite{ylt20}. The supervised methods contains B-CNN \cite{2015Lin} and Faster-RCNN \cite{2017Ren}, and both of VGG-16 \cite{2014Ks} and VGG-19 \cite{2014Ks} are selected as the alternative encoders. We used the self-supervised methods to pretrain the backbone networks and then leveraged the supervised methods to train the model.

\begin{table*}[t]
	\vskip 0.05in
	\caption{Performance of top-1 classification accuracy (\%) on the CIFAR10, CIFAR100, Tiny ImageNet, and STL-10 datasets. In the experiments, we evaluated CoCoNet and the ablation models. Fully-supervised classification results are provided for comparison. $^\ddagger$ indicates that the results are reproduced by our reimplementation. For a fair comparison, we adopted the same backbone networks with benchmarks. Note that the results of SplitBrain and CMC on STL-10 are reported in \cite{ylt20}.}
	\vskip -0.15in
	\label{tab:a}
	\setlength{\tabcolsep}{6pt}
	\begin{center}
		\begin{tabular}{l|ccc||ccc||ccc||ccc}
			\hline\rule{-2.2pt}{8pt}
			\multirow{2}*{Model} & \multicolumn{3}{c||}{CIFAR10} & \multicolumn{3}{c||}{CIFAR100} & \multicolumn{3}{c||}{Tiny ImageNet} & \multicolumn{3}{c}{STL-10} \\ 
			\cline{2-13}\rule{-2.2pt}{8pt}
			& conv & fc & Average & conv & fc & Average & conv & fc & Average & conv & fc & Average \\
			\hline
			\hline\rule{-2.2pt}{8pt}
			\text{Fully supervised} & & 75.39 & & & 42.27 & & & 36.60 & & & 68.70& \\
			\hline\rule{-2.2pt}{8pt}
			\text{VAE\cite{dpk14}} & 60.71 & 60.54& 60.63 & 37.21 & 34.05 & 35.63 & 18.63 & 16.88 & 17.76 & 58.27 & 56.72 & 57.50 \\\rule{-2.2pt}{8pt}
			\text{AE\cite{hge06}} & 62.19 & 55.78 & 58.99 & 31.50 & 23.89 & 27.70 & 19.07 & 16.39 & 17.73 & 58.19 & 55.57 & 56.88 \\\rule{-2.2pt}{8pt}
			\text{${\beta}$-VAE\cite{aaa16}} & 62.40 & 57.89 & 60.15 & 32.28 & 26.89 & 29.59 & 19.29 & 16.77 & 18.03 & 57.15 & 55.14 & 56.15 \\\rule{-2.2pt}{8pt}
			\text{AAE\cite{am15}} & 59.44 & 57.19 & 58.32 & 36.22 & 33.38 & 34.80 & 18.04 & 17.27 & 17.66 & 59.54 & 54.47 & 57.01 \\\rule{-2.2pt}{8pt}
			\text{BiGAN\cite{jd17}} & 62.57 & 62.74 & 62.66 & 37.59 & 33.34 & 35.47 & 24.38 & 20.21 & 22.30 & 71.53 & 67.18 & 69.36 \\\rule{-2.2pt}{8pt}
			\text{NAT\cite{pb17}} & 56.19 & 51.29 & 53.74 & 29.18 & 24.57 & 26.88 & 13.70 & 11.62 & 12.66 & 64.32 & 61.43 & 62.88 \\\rule{-2.2pt}{8pt}
			\text{SplitBrain$^\ddagger$\cite{rz17}} & 77.56 & 76.80 & 77.18 & 51.74 & 47.02 & 49.38 & 32.95 & 33.24 & 33.10& 72.35 & 63.15 & 67.75 \\\rule{-2.2pt}{8pt}
			\text{DIM\cite{rdh19}} & 73.25 & 73.62 & 73.44 & 48.13 & 45.92 & 47.03 & 33.54 & 36.88 & 35.21 & 72.86 & 70.85 & 71.86 \\\rule{-2.2pt}{8pt}
			\text{SimCLR$^\ddagger$\cite{tc20}} & 80.58 & 80.07 & 80.33 & 50.03 & 49.82 & 49.93 & 36.24 & 39.83 & 38.04 & 75.57 & 77.15 & 76.36 \\\rule{-2.2pt}{8pt}
			\text{SwAV$^\ddagger$\cite{cm20}} & 66.18 & 69.23 & 67.71 & 50.87 & 51.23 & 51.05 & 39.56 & 38.87 & 39.22 & 70.32 & 71.40 & 70.86 \\\rule{-2.2pt}{8pt}
			\text{CMC$^\ddagger$\cite{ylt20}} & 81.31 & \textbf{83.28} & 82.30 & 58.13 & 56.72 & 57.43 & 41.58 & 40.11 & 40.85 & 83.03 & \textbf{85.06} & 84.05 \\
			\hline
			\hline\rule{-2.2pt}{8pt}
			\textbf{GloCo+SwAV} & 74.63 & 73.58 & 74.11 & 57.09 & 55.21 & 56.15 & 40.20 & 41.02 & 40.61 & 72.38 & 71.06 & 71.72 \\\rule{-2.2pt}{8pt}
			\textbf{GloCo+CMC} & 82.27 & 82.95 & 82.61 & \textbf{59.02} & 57.38 & 58.20 & 42.21 & 39.62 & 40.92 & 84.12 & 85.03 & \textbf{84.58} \\\rule{-2.2pt}{8pt}
			\textbf{LoCo} & 82.74 & 82.31 & 82.53 & 57.86 & \textbf{58.29} & 58.08 & \textbf{42.74} & 40.94 & 41.84 & 82.63 & 83.75 & 83.19 \\
			\hline\rule{-2.2pt}{9pt}
			\textbf{CoCoNet} & \textbf{83.10} & 83.24 & \textbf{83.17} & 58.64 & 58.21 & \textbf{58.43} & 42.28 & \textbf{43.63} & \textbf{42.96} & \textbf{85.34} & 83.82 & \textbf{84.58} \\
			\hline
		\end{tabular}
	\end{center}
	\vskip -0.2in
\end{table*}

\begin{table}[t]
	\vskip 0.in
	\caption{Performance of top-1 classification accuracy (\%) on the ImageNet dataset. We followed the experiments of CMC \cite{ylt20}, and we further reimplemented SimCLR and SwAV based on the same backbone networks.}
	\vskip -0.15in
	\label{tab:inclf}
	\setlength{\tabcolsep}{3pt}
	\begin{center}
		\begin{tabular}{l|cccccc}
			\hline\rule{-2.2pt}{8pt}
			\multirow{2}*{Model} & \multicolumn{6}{c}{ImageNet} \\ 
			\cline{2-7}\rule{-2.2pt}{8pt}
			& conv1 & conv2 & conv3 & conv4 & conv5 & Average \\
			\hline
			\hline\rule{-2.2pt}{8pt}
			\text{Fully supervised} & 19.3 & 36.3 & 44.2 & 48.3 & 50.5 & 39.7 \\
			\hline\rule{-2.2pt}{8pt}
			\text{Context \cite{2015Carl}} & 16.2 & 23.3 & 30.2 & 31.7 & 29.6 & 26.2 \\\rule{-2.2pt}{8pt}
			\text{Colorization \cite{2016Zhang}} & 13.1 & 24.8 & 31.0 & 32.6 & 31.8 & 26.7 \\\rule{-2.2pt}{8pt}
			\text{Jigsaw \cite{2016Noroozi}} & 19.2 & 30.1 & 34.7 & 33.9 & 28.3 & 29.2 \\\rule{-2.2pt}{8pt}
			\text{BiGAN \cite{jd17}} & 17.7 & 24.5 & 31.0 & 29.9 & 28.0 & 26.2 \\\rule{-2.2pt}{8pt}
			\text{SplitBrain \cite{rz17}} & 17.7 & 29.3 & 35.4 & 35.2 & 32.8 & 28.7 \\\rule{-2.2pt}{8pt}
			\text{Counting \cite{2017Noroozi}} & 18.0 & 30.6 & 34.3 & 32.5 & 25.7 & 28.2 \\\rule{-2.2pt}{8pt}
			\text{Inst-Dis \cite{un2}} & 16.8 & 26.5 & 31.8 & 34.1 & 35.6 & 29.0 \\\rule{-2.2pt}{8pt}
			\text{RotNet \cite{2018Gidaris}} & 18.8 & 31.7 & 38.7 & 38.2 & 36.5 & 32.8 \\\rule{-2.2pt}{8pt}
			\text{DeepCluster \cite{2018Caron}} & 12.9 & 29.2 & 38.2 & 39.8 & 36.1 & 32.2 \\\rule{-2.2pt}{8pt}
			\text{DIM \cite{rdh19}} & 14.5 & 24.9 & 29.1 & 32.4 & 35.9 & 27.4 \\\rule{-2.2pt}{8pt}
			\text{SimCLR$^\ddagger$\cite{tc20}} & 15.9 & 22.4 & 34.5 & 34.0 & 37.7 & 28.9 \\\rule{-2.2pt}{8pt}
			\text{SwAV$^\ddagger$\cite{cm20}} & 13.6 & 23.8 & 32.2 & 27.3 & 38.0 & 27.0 \\\rule{-2.2pt}{8pt}
			\text{CMC \cite{ylt20}} & \textbf{18.4} & 33.5 & 38.1 & 40.4 & 42.6 & 34.6 \\
			\hline
			\hline\rule{-2.2pt}{8pt}
			\textbf{GloCo+SwAV} & 16.8 & 28.5 & 33.7 & 26.2 & 34.9 & 28.0 \\\rule{-2.2pt}{8pt}
			\textbf{GloCo+CMC} & 17.7 & 36.2 & 39.6 & \textbf{41.1} & 43.0 & 35.5 \\\rule{-2.2pt}{8pt}
			\textbf{LoCo} & 17.9 & 34.4 & 38.4 & 38.6 & 43.7 & 34.6 \\
			\hline\rule{-2.2pt}{9pt}
			\textbf{CoCoNet} & 18.2 & \textbf{36.3} & \textbf{39.8} & 40.5 & \textbf{43.8} & \textbf{35.7} \\
			\hline
		\end{tabular}
	\end{center}
	\vskip -0.2in
\end{table}

\subsection{Results and discussion} \label{sec:results}
\subsubsection{Comparisons on benchmark image classification}
We extensively evaluated our proposed CoCoNet method on several benchmark datasets and tasks against the state-of-the-art methods. Table \ref{tab:a} shows the comparison results on the CIFAR10, CIFAR100, Tiny ImageNet, and STL-10 benchmark datasets respectively. The last 4 rows of tables represent the results of our proposed methods. Specifically, GloCo+SwAV, GloCo+CMC, and LoCo are the ablation models designed to eliminate different parts' influence. In general, CoCoNet outperforms all models presented here by a significant margin when using the benchmark datasets. CoCoNet even outperforms the fully-supervised classifier without fine-tuning for the specific architectures presented, which shows that the representations learned by CoCoNet are better than the original images. However, in different experimental settings, we found that a designed fully-supervised classifier can outperform the state-of-the-art methods by a wider margin. Meanwhile, when more powerful backbone networks are used as encoders and specific data augmentations are adopted, the approaches perform better on the benchmark datasets (albeit in different settings, e.g., AMDIM \cite{pb19}). However, these approaches leverage different and deeper networks as their backbone encoders, so we excluded these methods from the benchmarks. Hence, the ablation models, i.e., GloCo+SwAV, GloCo+CMC, and LoCo, outperform most state-of-the-art approaches on all datasets. Yet, our proposed methods only outperform CMC with a small advantage. After comparison, we found out that CMC adopts a specialized architecture with carefully-chosen data augmentations, and in general, our proposed GloCo can additionally enhance CMC, e.g., GloCo+CMC outperforms CMC. To our knowledge, in the field of unsupervised learning, the results of CoCoNet are state-of-the-art following the proposed experimental settings. Specifically, the results support the proposed CoCoNet effectiveness to preserve the consistency of unlabeled data across views. As shown in Tables \ref{tab:a}, the best results are in the last row, indicating that the feature representations learned by CoCoNet are discriminative.

\begin{table}[t]
	\vskip 0.in
	\caption{Performance of top-1 classification accuracy (\%) on the CIFAR10 and STL-10 datasets. We compared the proposed method with the state-of-the-art unsupervised methods. We adopted ResNet-50 \cite{kh16} as the encoders.}
	\vskip -0.15in
	\label{tab:c}
	\setlength{\tabcolsep}{10.5pt}
	\begin{center}
		\begin{tabular}{l|c|c|c}
			\hline\rule{-2.2pt}{8pt}
			\text{Model} & CIFAR10 & STL-10 & Average \\ 
			\hline
			\hline\rule{-2.2pt}{8pt}
			\text{CPC \cite{vdo18}} & 77.45 & 77.81 & 77.63 \\\rule{-2.2pt}{8pt}
			\text{DIM \cite{rdh19}} & 77.51 & 78.21 & 77.86 \\\rule{-2.2pt}{8pt}
			\text{SwAV$^\ddagger$ \cite{cm20}} & 83.15 & 82.93 & 83.04 \\\rule{-2.2pt}{8pt}
			\text{SimCLR$^\ddagger$ \cite{tc20}} & 84.63 & 83.75 & 84.19 \\\rule{-2.2pt}{8pt}
			\text{CMC$^{\ddagger}$ \cite{ylt20}} & 86.10 & 86.83 & 86.47 \\
			\hline
			\hline\rule{-2.2pt}{8pt}
			\textbf{GloCo+SwAV} & 84.62 & 85.81 & 85.22 \\\rule{-2.2pt}{8pt}
			\textbf{GloCo+CMC} & 87.78 & 89.11 & 88.45  \\\rule{-2.2pt}{8pt}
			\textbf{LoCo} & 89.06 & 88.21 & 88.64 \\
			\hline\rule{-2.2pt}{9pt}
			\textbf{CoCoNet} & \textbf{89.58} & \textbf{89.37} & \textbf{89.48} \\
			\hline
		\end{tabular}
	\end{center}
	\vskip -0.2in
\end{table}

\textbf{Benchmarking CoCoNet on a large-scale dataset.} As demonstrated in Table \ref{tab:inclf}, the proposed CoCoNet has consistent performance even on a large benchmark dataset (e.g., ImageNet) within different network architectures. CoCoNet beats the state-of-the-art unsupervised method (e.g., CMC) by 1.1\% on average. The performance of LoCo is on par with that of CMC, which demonstrates that our proposed local complementarity preserving module can improve the discriminability of the learned features by utilizing the complementarity-factor to capture the complementary information from different views. Furthermore, the ablation model GloCo+SwAV outperforms SwAV by 1.8\%, and GloCo+CMC outperforms CMC by 0.9\% respectively, which indicates that the global consistency preserving network enhances the baseline methods by aligning the distribution of views in the hidden space. We also observed that the performance of the compared methods is unstable within weaker backbone networks, such as conv1 or conv2, and our consideration behind this phenomenon is that oversimplified networks do not have enough mapping capabilities to learn discriminative high-dimensional feature representations by utilizing complex self-supervised tasks.

\begin{table*}[t]
	\vskip 0.in
	\renewcommand\arraystretch{1.2}
	\caption{Performance of classification (F1-Measure) on the CIFAR10, CIFAR100, STL-10, Tiny ImageNet, and ImageNet.}
	\vskip -0.15in
	\label{tab:f1measure}
	\setlength{\tabcolsep}{10pt}
	\begin{center}
		\begin{tabular}{l|cc||cc||cc||cc||c}
			\hline\rule{0pt}{9pt}
			\multirow{2}*{Model} & \multicolumn{2}{c||}{CIFAR10} & \multicolumn{2}{c||}{CIFAR100} & \multicolumn{2}{c||}{Tiny ImageNet} & \multicolumn{2}{c||}{STL-10} & \multicolumn{1}{c}{ImageNet} \\
			\cline{2-10}
			& conv & fc & conv & fc & conv & fc & conv & fc & conv \\
			\hline
			\hline
			\text{DIM \cite{rdh19}} & 0.7280 & 0.7276 & 0.4729 & 0.4435 & 0.3308 & 0.3461 & 0.7264 & 0.7042 & 0.3210 \\
			\text{SwAV \cite{cm20}} & 0.6578 & 0.6842 & 0.4990 & 0.4932 & 0.3808 & 0.3642 & 0.7000 & 0.7076 & 0.3407 \\
			\text{SimCLR \cite{tc20}} & 0.7980 & 0.7937 & 0.4856 & 0.4810 & 0.3508 & 0.3919 & 0.7542 & 0.7681 & 0.3445 \\
			\text{CMC \cite{ylt20}} & 0.8101 & 0.8210 & 0.5753 & 0.5611 & 0.3952 & 0.3858 & 0.8280 & \textbf{0.8461} & 0.3918 \\
			\hline
			\hline
			\textbf{GloCo + SwAV} & 0.7409 & 0.7324 & 0.5567 & 0.5338 & 0.3858 & 0.4056 & 0.7223 & 0.7039 & 0.3372 \\
			\textbf{LoCo} & 0.8232 & 0.8203 & 0.5609 & \textbf{0.5746} & 0.3996 & 0.3983 & 0.8234 & 0.8308 & \textbf{0.4085} \\
			\textbf{CoCoNet} & \textbf{0.8258} & \textbf{0.8241} & \textbf{0.5786} & 0.5745 & \textbf{0.4122} & \textbf{0.4239} & \textbf{0.8512} & 0.8347 & 0.4062 \\
			\hline
		\end{tabular}
	\end{center}
	\vskip -0.2in
\end{table*}

\begin{table}[t]
	\small
	\renewcommand\arraystretch{1.1}
	\vskip 0.05in
	\caption{Performance of chemical molecules property prediction in OGB datasets, including two downstream tasks: graph regression and graph classification.}
	\vskip -0.18in
	\label{tab:graphprediction}
	\setlength{\tabcolsep}{5pt}
	\begin{center}
		\begin{tabular}{l|cc||ccc}
			\hline\rule{0pt}{10pt}
			\multirow{3}*{Model}  & mole & moll  &  molb   & molt &  mols \\
			\cline{2-6}\rule{0pt}{10pt}
			& \multicolumn{2}{c||}{Regression} & \multicolumn{3}{c}{Classification} \\
			\cline{2-6}\rule{0pt}{10pt}
			& \multicolumn{2}{c||}{(RMSE $\downarrow$)} & \multicolumn{3}{c}{(ROC-AUC\% $\uparrow$)} \\			
			\hline
			\text{GIN RIU \cite{xu2018powerful}} & 1.706 & 1.075 & 64.48 & 71.53 & 62.29 \\
			\hline
			\text{InfoGraph \cite{sun2020infograph}} & 1.344 & 1.005 & 66.33 & 69.74 & 60.54 \\
			\text{GraphCL \cite{you2020graph}} & 1.272 & 0.910 & 68.22 & 72.40 & 61.76 \\
			\text{AD-GCL \cite{suresh2021adversarial}} & 1.270 & 0.926 & 68.26 & 71.08 & \textbf{61.83} \\
			\hline
			\textbf{GloCo + GraphCL} & 1.272 & 0.913 & 68.21 & \textbf{72.42} & 61.76 \\
			\textbf{GloCo + AD-GCL} & 1.271 & 0.925 & 68.24 & 71.09 & 61.78 \\
			\textbf{LoCo} & \textbf{1.268} & 0.910 & 68.49 & 71.32 & 61.81 \\
			\textbf{CoCoNet} & 1.269 & \textbf{0.907} & \textbf{68.53} & 72.37 & 61.80 \\
			\hline
		\end{tabular}
	\end{center}
	\vskip -0.2in
\end{table}

\textbf{Performing CoCoNet with ResNet.} We performed extended classification comparisons on the CIFAR10 and STL-10 datasets with results shown in Table \ref{tab:c}. We set the experiments by following the same principle of the CPC and DIM comparison \cite{rdh19}. The results show that CoCoNet and the experimental ablation models outperform state-of-the-art methods on the CIFAR10 and STL-10 datasets, respectively. Since ResNet-based encoders are adopted on the comparisons, and our proposed methods outperform the benchmarks, we reckoned that CoCoNet has strong adaptability to different encoders.

\textbf{Evaluation with F1-Measure.} In Table \ref{tab:f1measure}, we evaluated the compared methods with F1-Measure. We observed that CoCoNet is still state-of-the-art. It is widely acknowledged that there would be a big difference between the results of Accuracy and F1-Measure in the case of imbalanced sample categories (e.g., long-tail datasets). The benchmark datasets we conducted are all balanced datasets, including CIFAR10, CIFAR100, Tiny ImageNet, STL-10, and ImageNet. So, we considered that the results of our comparisons would not show a big difference, whether it is based on Accuracy or F1-Measure. However, in the case of imbalanced datasets, we have also included the F1-measure scores for the comparisons on the test sets of benchmark datasets to provide support for the superiority of our proposed CoCoNet, which is demonstrated in Table \ref{tab:f1measure}. Note that the comparisons are based on Macro F1-Measure, and the results indicate that CoCoNet can still achieve the best performance.

\begin{table}[t]
	\renewcommand\arraystretch{1.1}
	\vskip 0.05in
	\caption{Action recognition accuracy (\%) to evaluate \textit{task} and \textit{dataset} transferability on benchmark video datasets. We followed the setting of \cite{ylt20}. $\dagger$ denotes different network architecture. $\ast$ denotes our reimplementation.}
	\vskip -0.18in
	\label{tab:action}
	\setlength{\tabcolsep}{4.5pt}
	\begin{center}
		\begin{small}
			\begin{tabular}{l|c|c|c}
				\hline
				\multirow{2}*{Method} & Number & \multirow{2}*{UCF-101} & \multirow{2}*{HMDB-51} \\
				& of Views & & \\
				\hline
				\text{Random} & - & 48.2 & 19.5 \\
				\text{ImageNet} & - & 67.7 & 28.0 \\
				\hline
				\text{VGAN$^\dagger$ \cite{2016Generating}} & 2 & 52.1 & - \\
				\text{LT-Motion$^\dagger$ \cite{2017Unsupervised}} & 2 & 53.0 & - \\
				\hline
				\text{TempCoh \cite{DBLP:conf/icml/MobahiCW09}} & 1 & 45.4 & 15.9 \\
				\text{Shuffle and Learn \cite{DBLP:conf/eccv/MisraZH16}} & 1 & 50.2 & 18.1 \\
				\text{Geometry \cite{2018Geometry}} & 2 & 55.1 & 23.3 \\
				\text{OPN \cite{2017UnsupervisedLee}} & 1 & 56.3 & 22.1 \\
				\text{ST Order \cite{Uta2018Improving}} & 1 & 58.6 & 25.0 \\
				\text{Cross and Learn \cite{2018Cross}} & 2 & 58.7 & 27.2 \\
				\text{CMC \cite{ylt20}} & 3 & 59.1 & 26.7 \\
				\text{CMC$^\ast$} & 3 & 58.8 & 26.3 \\
				\hline
				\textbf{GloCo + CMC} & 3 & \textbf{59.5} & 27.0 \\
				\textbf{LoCo} & 3 & 59.2 & 26.8 \\
				\textbf{CoCoNet} & 3 & 59.4 & \textbf{27.4} \\
				\hline
			\end{tabular}
		\end{small}
	\end{center}
	\vskip -0.2in
\end{table}

\begin{table*}[t]
	\vskip 0.in
	\caption{Comparison of classification top-1 accuracies (\%) on the real-world WHEBD-759-SIM dataset. We incorporated various SSL methods, e.g., SimCLR \cite{tc20}, SwAV \cite{cm20}, CMC \cite{ylt20}, the proposed CoCoNet, and the ablation models of CoCoNet, into the main supervised detection and classification methods, e.g., B-CNN \cite{2015Lin}, and Faster-RCNN \cite{2017Ren}, to form the improved semi-supervised learning methods for the tasks, respectively. VGG-16 \cite{2014Ks} and VGG-19 \cite{2014Ks} are the backbone networks.}
	\vskip -0.15in
	\label{tab:whebd}
	\setlength{\tabcolsep}{13pt}
	\begin{center}
		\begin{tabular}{c|l|l|cc|c}
			\hline\rule{-2.2pt}{8pt}
			\multirow{2}*{Training set} & \multirow{2}*{Supervised model} & \multirow{2}*{Self-supervised model} & \multicolumn{2}{c|}{Backbone network} & \multirow{2}*{Average} \\
			\cline{4-5}\rule{-2.2pt}{9pt}
			& & & VGG-16 & VGG-19 & \\
			\hline
			\hline\rule{-2.2pt}{8pt}
			\multirow{2}*{WHEBD-759} & \text{B-CNN} & N/A & \textbf{87.7} & 86.2 & 87.0 \\
			\rule{-2.2pt}{8pt}
			& \text{Faster-RCNN} & N/A & 86.3 & \textbf{90.2} & \textbf{88.3} \\
			\hline
			\hline\rule{-2.2pt}{8pt}
			\multirow{16}*{WHEBD-759-SIM} & \multirow{8}*{\text{B-CNN}} & N/A & 63.7 & 64.6 & 64.2 \\
			\cline{3-6}\rule{-2.2pt}{8.5pt}\rule{-2.2pt}{8pt}
			& & SimCLR & 64.5 & 67.0 & 65.8 \\\rule{-2.2pt}{8pt}
			& & SwAV & 64.1 & 65.3 & 64.7 \\\rule{-2.2pt}{8pt}
			& & CMC & 66.5 & 68.7 & 67.6 \\
			\cline{3-6}\rule{-2.2pt}{8.5pt}
			& & \textbf{GloCo+SwAV} & 64.7 & 66.2 & 65.5 \\\rule{-2.2pt}{8pt}
			& & \textbf{GloCo+CMC} & 67.4 & 68.5 & 68.0 \\\rule{-2.2pt}{8pt}
			& & \textbf{LoCo} & 65.5 & \textbf{69.1} & 67.3 \\\rule{-2.2pt}{9pt}
			& & \textbf{CoCoNet} & \textbf{69.2} & 69.0 & \textbf{69.1} \\
			\cline{2-6}\rule{-2.2pt}{8pt}
			\vspace{-0.30cm}
			& \multicolumn{5}{c}{ } \\
			\cline{2-6}\rule{-2.2pt}{8.5pt}
			& \multirow{8}*{\text{Faster-RCNN}} & N/A & 61.3 & 66.7 & 64.0 \\
			\cline{3-6}\rule{-2.2pt}{8.5pt}\rule{-2.2pt}{8pt}
			& & SimCLR & 61.8 & 67.4 & 64.6 \\\rule{-2.2pt}{8pt}
			& & SwAV & 63.3 & 66.8 & 65.1 \\\rule{-2.2pt}{8pt}
			& & CMC & 65.0 & 68.5 & 66.8 \\ 
			\cline{3-6}\rule{-2.2pt}{8.5pt}
			& & \textbf{GloCo+SwAV} & 63.6 & 67.0 & 65.3 \\
			& & \textbf{GloCo+CMC} & \textbf{65.9} & 69.6 & 67.8 \\\rule{-2.2pt}{8pt}
			& & \textbf{LoCo} & 64.3 & 68.4 & 66.4 \\\rule{-2.2pt}{9pt}
			& & \textbf{CoCoNet} & 65.2 & \textbf{70.9} & \textbf{68.1} \\
			\hline
		\end{tabular}
	\end{center}
	\vskip -0.2in
\end{table*}

\subsubsection{Comparisons on benchmark graph prediction}
To evaluate the generalization of our proposed CoCoNet, we conducted comparisons in the field of graph prediction. As shown in Table \ref{tab:graphprediction}, our methods beat the compared baselines on most benchmark downstream tasks. However, comparing the results of the combination variants of GloCo, e.g., GloCo + GraphCL, with the original baselines, e.g., GraphCL, we observed that the improvement is limited (or even arbitrary). We attributed such a phenomenon to the learning paradigm of our proposed GloCo, which prompts the model to capture \textit{view-consistency} information by globally aligning the distributions of views. Such a process requires the data of different views to form \textit{different} and \textit{constant} distributions. Yet, we constructed views of graphs by using the \textit{same} and \textit{random} graph augmentations, which is contrary to our original intention of developing GloCo. So in the setting of conventional graph contrastive learning, GloCo's poor improvement is understandable. The experimental results further support the effectiveness of CoCoNet and the ablation variant LoCo, which proves that our proposed \textit{view-complementarity} information also applies to the graph self-supervised representation learning task, and mining such information can improve the benchmark methods. However, the effect of mining view-complementarity's boost on the model is reduced, because randomly and inconsistently generated multiple views degenerate the performance of our proposed CoCoNet.

\subsubsection{Comparisons on benchmark action recognition}
To evaluate the effectiveness of CoCoNet on data of another modality, we conducted comparisons in the field of action recognition, which is based on video data. The results, reported in Table \ref{tab:action}, support that our proposed methods can improve the performance of benchmark methods on the action recognition task of video data. Comparing the test results on UCF-101, we observed that CoCoNet and variants have remarkable \textit{task} transferability, and comparing the test results on HMDB-51, we found that our methods have the good \textit{task} and \textit{dataset} transferability. Therefore, on the action recognition task of video data, our proposed view-consistency and -complementary information is valuable to mine in the paradigm of self-supervised multiview video representation learning, and CoCoNet can effectively model such information.

\subsubsection{Comparisons on practical object detection}
In order to deal with real-world issues, we further performed the proposed CoCoNet on a practical dataset against state-of-the-art methods. As demonstrated in Table \ref{tab:whebd}, the comparison results on the practical dataset show that the methods in bold generally have better performance than other compared methods in the same experimental settings. The first 2 rows in the table represent the results of the pure supervised methods trained on the completely labeled WHEBD-759 dataset, and we observed that the models under such training strategy have the best performance, e.g., B-CNN trained on WHEBD-759 beats B-CNN w/ CoCoNet trained on WHEBD-759-SIM by 17.9\%, and Faster-RCNN trained on WHEBD-759 beats Faster-RCNN w/ CoCoNet trained on WHEBD-759-SIM by 20.2\% on average. This phenomenon indicates that the label information is important for models to learn discriminative representations, which is hard to be replaced by the self-supervised method. However, considering the fundamental idea that SSL can enhance the models to learn discriminative features, we used self-supervised methods to pretrain the encoders and then trained the supervised models on WHEBD-759-SIM to get better classification performance, and it is proved by the experiments. We first introduced the advanced self-supervised methods (e.g., SimCLR, SwAV, and CMC) to pretrain the encoders, and the results support that this pretraining procedure can improve the supervised models, where CMC has the best results, and, in detail, B-CNN w/ CMC beats the single B-CNN by 3.4\%, and Faster-RCNN w/ CMC beats Faster-RCNN by 2.8\% on average. Furthermore, the best results are always acquired by our proposed CoCoNet, for example, B-CNN w/ CoCoNet improves B-CNN w/ CMC by 1.5\%, and Faster-RCNN w/ CoCoNet improves Faster-RCNN w/ CMC by 1.3\% on average. The ablation models also have better performance than other self-supervised methods, which proves the effectiveness of the proposed method ulteriorly.

\subsection{Deepgoing exploration} \label{sec:deepgoing}
We conducted further experiments to explore the deep properties of CoCoNet.

\begin{table}[t]
	\vskip 0.05in
	\caption{Comparison of applying different settings of views.}
	\vskip -0.15in
	\label{tab:viewnumstudy}
	\setlength{\tabcolsep}{13pt}
	\begin{center}
		\begin{tabular}{c|c|c|l|c}
			\hline
			\multicolumn{3}{c|}{Views} & \multirow{2}*{Methods} & \multirow{2}*{Results} \\
			\cline{0-2}\rule{-2.2pt}{9pt}
			RGB & L & ab & & \\
			\hline
			\hline\rule{-2.2pt}{8pt}
			\multirow{4}*{$\checkmark$} & \multirow{4}*{$\checkmark$} & \multirow{4}*{} & SimCLR \cite{tc20} & 76.37 \\
			\rule{-2.2pt}{8pt}
			& & & SwAV \cite{cm20} & 66.14 \\\rule{-2.2pt}{8pt}
			& & & CMC \cite{ylt20} & 79.92 \\\rule{-2.2pt}{8pt}
			& & & \textbf{CoCoNet} & \textbf{80.88} \\
			\hline\rule{-2.2pt}{8pt}
			\multirow{4}*{$\checkmark$} & \multirow{4}*{} & \multirow{4}*{$\checkmark$} & SimCLR \cite{tc20} & 74.30 \\
			\rule{-2.2pt}{8pt}
			& & & SwAV \cite{cm20} & 64.72 \\\rule{-2.2pt}{8pt}
			& & & CMC \cite{ylt20} & 76.01 \\\rule{-2.2pt}{8pt}
			& & & \textbf{CoCoNet} & \textbf{76.35} \\
			\hline\rule{-2.2pt}{8pt}
			\multirow{4}*{} & \multirow{4}*{$\checkmark$} & \multirow{4}*{$\checkmark$} & SimCLR \cite{tc20} & 75.74 \\
			\rule{-2.2pt}{8pt}
			& & & SwAV \cite{cm20} & 65.90 \\\rule{-2.2pt}{8pt}
			& & & CMC \cite{ylt20} & 77.69 \\\rule{-2.2pt}{8pt}
			& & & \textbf{CoCoNet} & \textbf{78.26} \\
			\hline\rule{-2.2pt}{8pt}
			\multirow{4}*{$\checkmark$} & \multirow{4}*{$\checkmark$} & \multirow{4}*{$\checkmark$} & SimCLR \cite{tc20} & 80.58 \\
			\rule{-2.2pt}{8pt}
			& & & SwAV \cite{cm20} & 66.18 \\\rule{-2.2pt}{8pt}
			& & & CMC \cite{ylt20} & 81.31 \\\rule{-2.2pt}{8pt}
			& & & \textbf{CoCoNet} & \textbf{83.10} \\
			\hline
		\end{tabular}
	\end{center}
	\vskip -0.2in
\end{table}

\begin{figure*}
	\vskip 0in
	\begin{center}
		\centerline{\includegraphics[width=1.4\columnwidth]{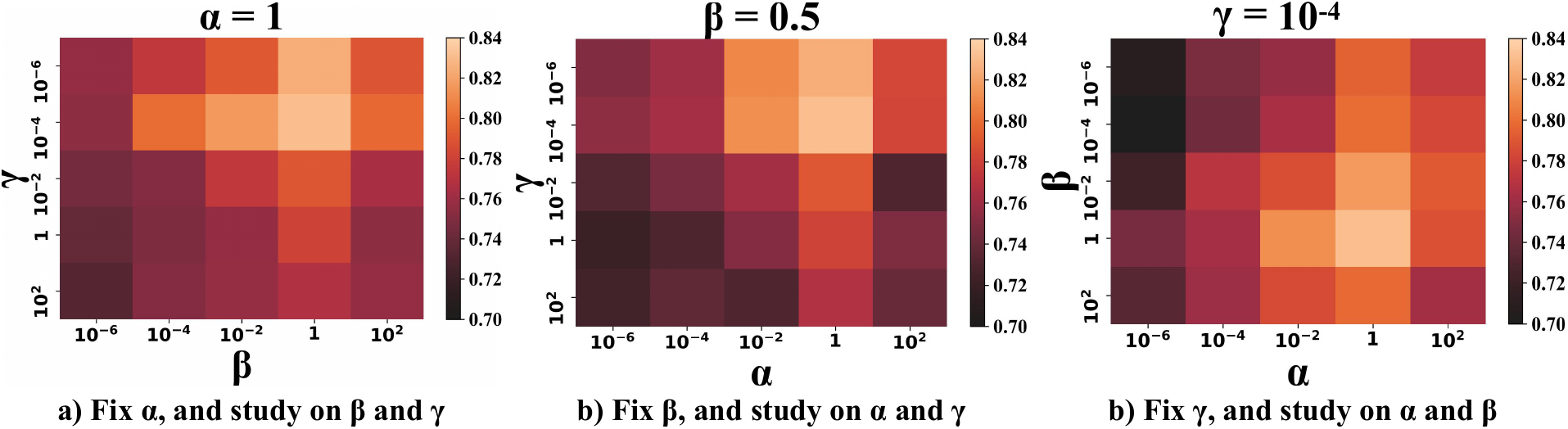}}
		\vskip -0.1in
		\caption{Influences of hyper-parameters $\alpha$, $\beta$ and $\gamma$ of CoCoNet. We conducted comparisons on the CIFAR10 dataset.}
		\label{fig:paramstudy}
	\end{center}
	\vskip -0.35in
\end{figure*}

\subsubsection{CoCoNet with different settings of views}
To validate whether CoCoNet has consistent performance under different settings of views, we selected several views from the RGB optical (RGB) view, the luminance (L) view, and the ab-color (ab) view and conducted experiments on CIFAR10 using conv encoder. As manifested in the Table \ref{tab:viewnumstudy}, CoCoNet has consistent performance and outperforms the compared methods on most tasks, and in details, CoCoNet beats the best benchmark method, i.e., CMC, by 0.96\% with RGB and L views, by 0.34\% with RGB and ab views, by 0.57\% with L and ab views, and by 1.79\% with all alternative views. We further added an experimental study on the comparison of the proposed CoCoNet and a typical contrastive learning method with more views. On the CIFAR10 dataset, we increased the number of views from 1 to 5 by sequentially adding the L, ab, RGB, Grayscale, and CbCr (belongs to YCbCr color space, where CB and Cr are the concentration offset components of blue and red) views. Results are shown in Figure \ref{fig:furtherviewsetting}. CoCoNet maintains its advantage over SimCLR when different view-settings are used for training. Compared with the addition of L, ab, and RGB, the additions of Grayscale and CbCr improve the performance of methods by a limited margin, and we considered the reason is that most information of Grayscale and CbCr is already contained by L, ab, and RGB. Concretely, our proposed consistency and complementarity regularization can indeed enhance the ability of the encoders to model multiple views, and such superiority is consistent under different view-settings.

\begin{figure}
    \begin{center}
        \begin{minipage}{0.265\textwidth}
            \includegraphics[width=\textwidth]{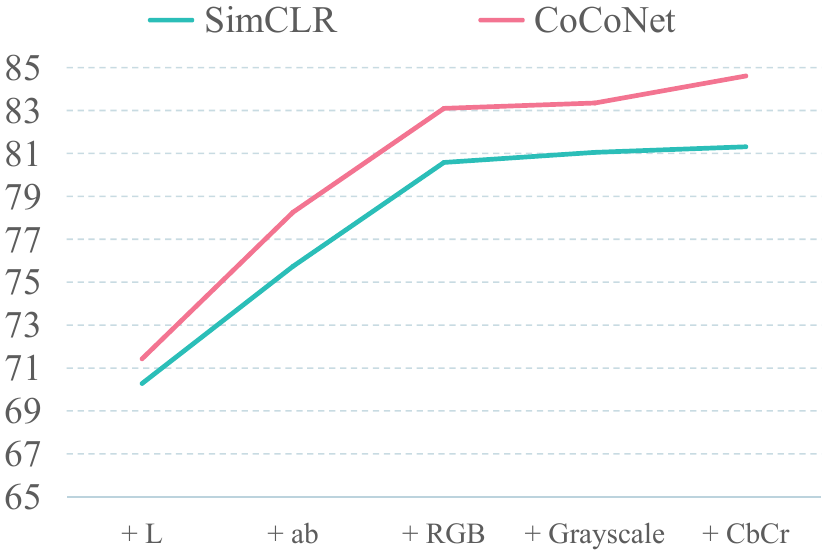}
        \end{minipage} \quad
        \begin{minipage}{0.20\textwidth}
            \caption{Comparisons with sequentially adding views on CIFAR10 using conv, which further indicates the superiority of CoCoNet over the compared baseline under different view-settings.}
            \label{fig:furtherviewsetting}
        \end{minipage}
    \end{center}
    \vspace{-0.6cm}
\end{figure}

\subsubsection{Hyper-parameter heatmap}
Specifically, we performed several experiments to study the influence of the tunable hyper-parameters. The hyper-parameter $\alpha$ balances the impact of the local complementarity preserving module. $\beta$ balances the impact of conventional contrastive learning loss. $\gamma$ balances the impact of the global consistency preserving module. To explore the influence of $\alpha$ and $\beta$, we fixed $\gamma$ and selected $\alpha$ from the range of \{$10^{-6}, 10^{-4}, 10^{-2}, 1, 10^{2}$\} and $\beta$ from the range of \{$10^{-6}, 10^{-4}, 10^{-2}, 1, 10^{2}$\}. Following the same principle, we selected $\gamma$ from the range of \{$10^{-6}, 10^{-4}, 10^{-2}, 1, 10^{2}$\}. As $a)$, $b)$, and $c)$ shown in Figure \ref{fig:paramstudy}, we observed that good classification performance is highly dependent on the local complementarity preserving module, i.e., $\alpha$. An appropriate tuning of the impact of the contrastive loss, i.e., $\beta$, is needed for CoCoNet to enhance the cross-view feature discriminability. As such, the global consistency preserving module helps in classification performance with a small amount of $\gamma$, because it aligns the distribution of views, which helps to model the view-shared information.

\begin{figure*}
	\vskip 0in
	\begin{center}
		\centerline{\includegraphics[width=1.8\columnwidth]{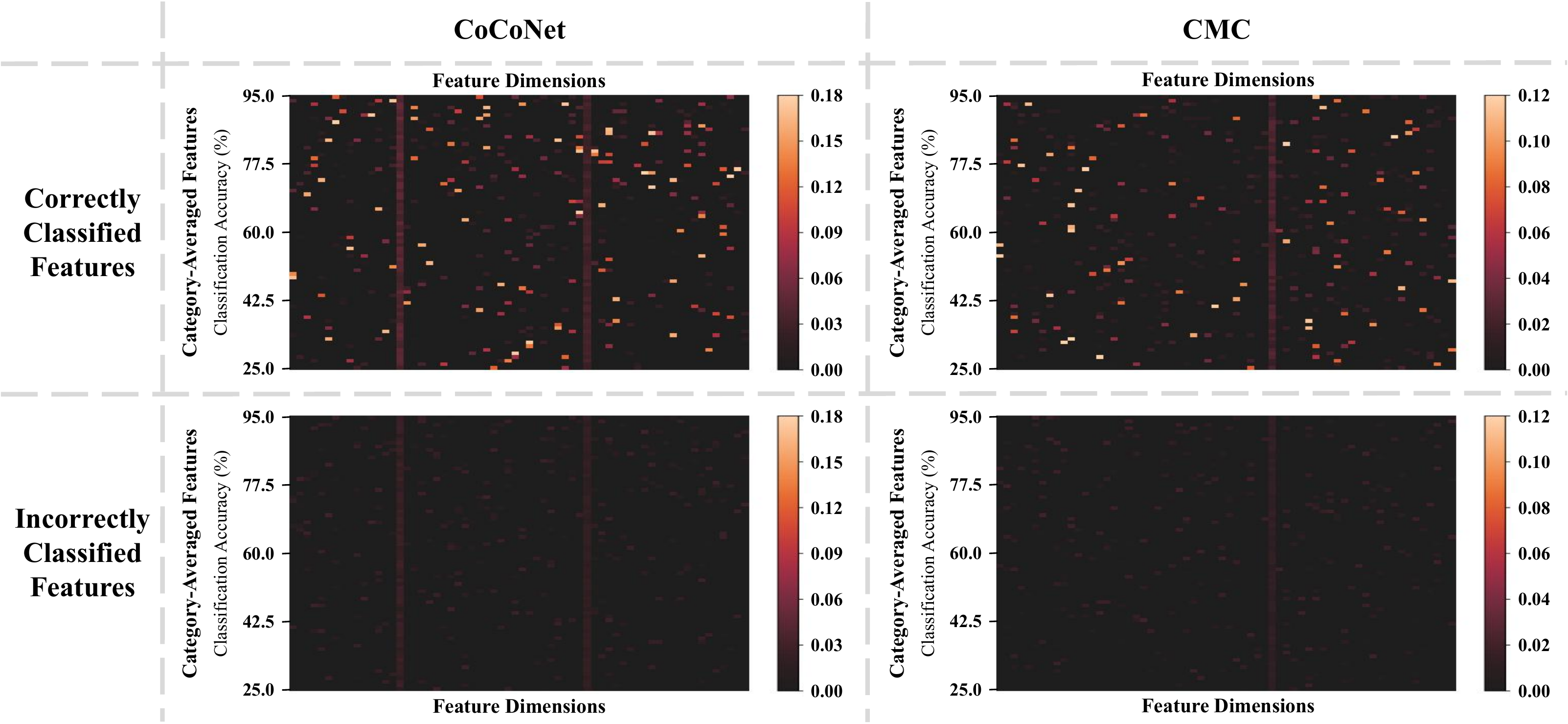}}
		\vskip -0.1in
		\caption{Visual comparisons of the top category-averages features of correct and incorrect classifications in the representation space of CoCoNet and CMC. We derived averaged features for each category, and according to the classification results, we retrieved the top category-averages features to evaluate the \textit{activated} feature elements of CoCoNet and CMC, which is conducted on the Tiny ImageNet dataset by following the experimental principle of \cite{DBLP:journals/corr/abs-2203-01881}. We observe that the correct classification contains specific activated feature elements that are more salient (colorful) than other feature elements, whereas the incorrect classifications do not.}
		\label{fig:casestudy}
	\end{center}
	\vskip -0.35in
\end{figure*}

\subsubsection{CoCoNet with different discrepancy metrics} \label{sec:diffdis}
We conducted an ablation comparison by employing different discrepancy metrics for the proposed method. As shown in Table \ref{tab:diffdis}, we directly replaced the discrepancy metric in GloCo module with KL, WD, etc. We observed that no matter which discrepancy metric is based on, GloCo + CMC can improve CMC, which proves the effectiveness of aligning the distributions of multiple views. Yet the improvements in taking different discrepancy metrics are inconsistent. Generally, the Wasserstein distance-based methods beat the KL divergence-based method, and we discussed the reasons in Section \ref{sec:advgswd}. Since directly calculating the high-dimensional Wasserstein distance is extremely computationally expensive, the difference between the Wasserstein distance-based methods is that the approaches to approximately calculate Wasserstein distances. In detail, WD uses the dual form of Wasserstein distance, yet the Lipschitz constraint is difficult to meet. SWD first obtains the one-dimensional representation of the high-dimensional probability distribution through linear mapping and then calculates the Wasserstein distance of the one-dimensional representation of the two probability distributions. Likewise, GSWD uses a similar approach except that generalized nonlinear mapping is used instead of linear mapping. The results demonstrate that, in the setting of multi-view learning, GSWD can retain more discriminative information than SWD in dimensionality reduction.

\begin{table}[t]
	\renewcommand\arraystretch{1.1}
	\vskip 0.05in
	\caption{Classification top-1 accuracy (\%) on the CIFAR10 and Tiny ImageNet datasets. We conducted several experiments based on the \textit{conv} encoder and classifier as in Table \ref{tab:a}. We introduced the optional discrepancy metrics, e.g., KL-divergence (KL) \cite{2003Goldbberger}, Wasserstein distance (WD) \cite{kuroki2019}, sliced Wasserstein distance (SWD) \cite{lee2019sliced}, and generalized sliced Wasserstein distance (GSWD) \cite{DBLP:conf/nips/KolouriNSBR19}, to GloCo + CMC. Notably, the GSWD-based GloCo + CMC outperforms benchmark methods but falls short compared to CoCoNet.}
	\vskip -0.15in
	\label{tab:diffdis}
	\setlength{\tabcolsep}{4.pt}
	\begin{center}
		\begin{tabular}{l|ccc}
			\hline
			\text{Model} & CIFAR10 & Tiny ImageNet & Average\\
			\hline
			\text{CMC} & 81.31 & 41.58 & 61.45 \\
			\hline
			\text{GloCo + CMC w/ KL} & 81.62 & 42.08 & 61.85 \\
			\text{GloCo + CMC w/ WD} & 82.02 & 42.14 & 62.08 \\
			\text{GloCo + CMC w/ SWD} & 82.07 & 42.05 & 62.06 \\
			\textbf{GloCo + CMC w/ GSWD} & 82.27 & 42.21 & 62.24 \\
			\hline
			\textbf{CoCoNet} & \textbf{83.10} & \textbf{42.28} & \textbf{62.69} \\
			\hline
		\end{tabular}
	\end{center}
	\vspace{-0.5cm}
\end{table}

\begin{figure}
    \begin{center}
        \begin{minipage}{0.23\textwidth}
            \includegraphics[width=\textwidth]{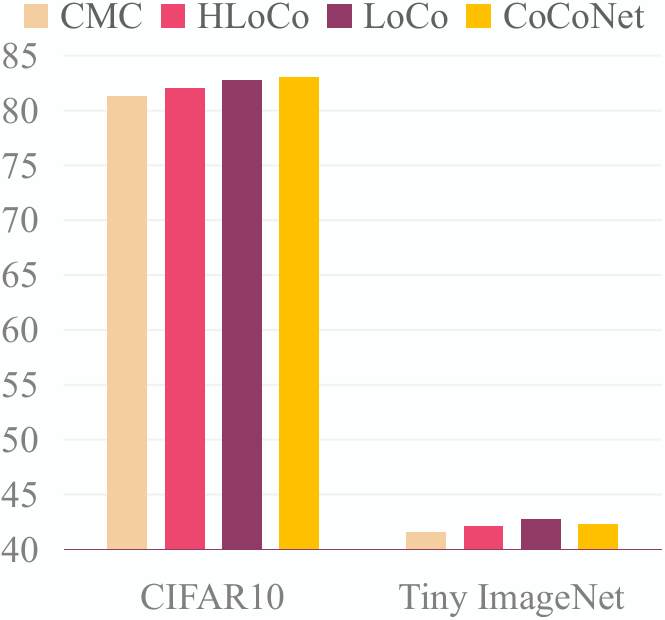}
        \end{minipage} \quad
        \begin{minipage}{0.21\textwidth}
            \caption{Research on the effectiveness of a specific sub-structure of the LoCo module. Specifically, we decoupled modeling the low-level feature information from learning the complementarity-factor $CF$ and evaluate the ablation model.}
            \label{fig:lowlevelabl}
        \end{minipage}
    \end{center}
    \vspace{-0.6cm}
\end{figure}

\subsubsection{Validating the effectiveness of modeling low-level information of the LoCo module} \label{sec:lowlevelabl}
To decouple the design of the architecture and the design of the learning objective, we conducted a further exploration with CoCoNet, LoCo, and an ablation model HLoCo by removing the low-level feature maps from LoCo, i.e., HLoCo only uses the high-level feature vectors, not the information of low-level feature maps. As shown in Figure \ref{fig:lowlevelabl} and Table \ref{tab:a}, we observed that HLoCo beats the baselines with the same high-level representations, which shows the effectiveness of the learning objective $\mathcal{L}_{LoCo}$. Moreover, both LoCo and CoCoNet can outperform HLoCo on benchmark datasets, indicating that modeling the discriminative information from low-level feature maps can generally improve the performance of our method. The reason behind such a phenomenon is that from the perspective of the information theory, compared with the low-level feature map, the high-level feature vector may loss some complementarity information. Therefore, according to the amount of information entropy, HLoCo, like typical contrastive learning methods \cite{tc20, cm20, ylt20}, only contains the information of high-dimensional feature vectors, while the useful complementarity information may be lost within the encoding process so that compared with LoCo and CoCoNet, complementarity information is not sufficiently explored by HLoCo. We further observed that comparing HLoCo and LoCo, LoCo improves HLoCo by a larger margin on Tiny ImageNet than on CIFAR10. We reckoned that the classification on Tiny ImageNet requires more complementarity information, since Tiny ImageNet contains 200 categories while CIFAR10 only contains 10 categories.

\begin{figure}
    \begin{center}
        \begin{minipage}{0.23\textwidth}
            \includegraphics[width=\textwidth]{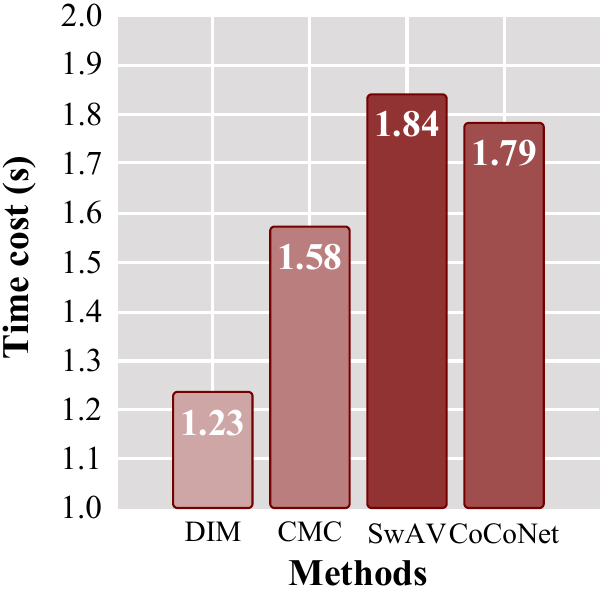}
        \end{minipage} \quad
        \begin{minipage}{0.21\textwidth}
            \caption{The average computational time costs of the training of a batch during the first 20 epochs. The process includes the feed-forward calculation and the back-propagation training of the encoders.}
            \label{fig:tcchart}
        \end{minipage}
    \end{center}
    \vspace{-0.6cm}
\end{figure}

\begin{figure*}
	\vskip 0in
	\begin{center}
		\centerline{\includegraphics[width=1.55\columnwidth]{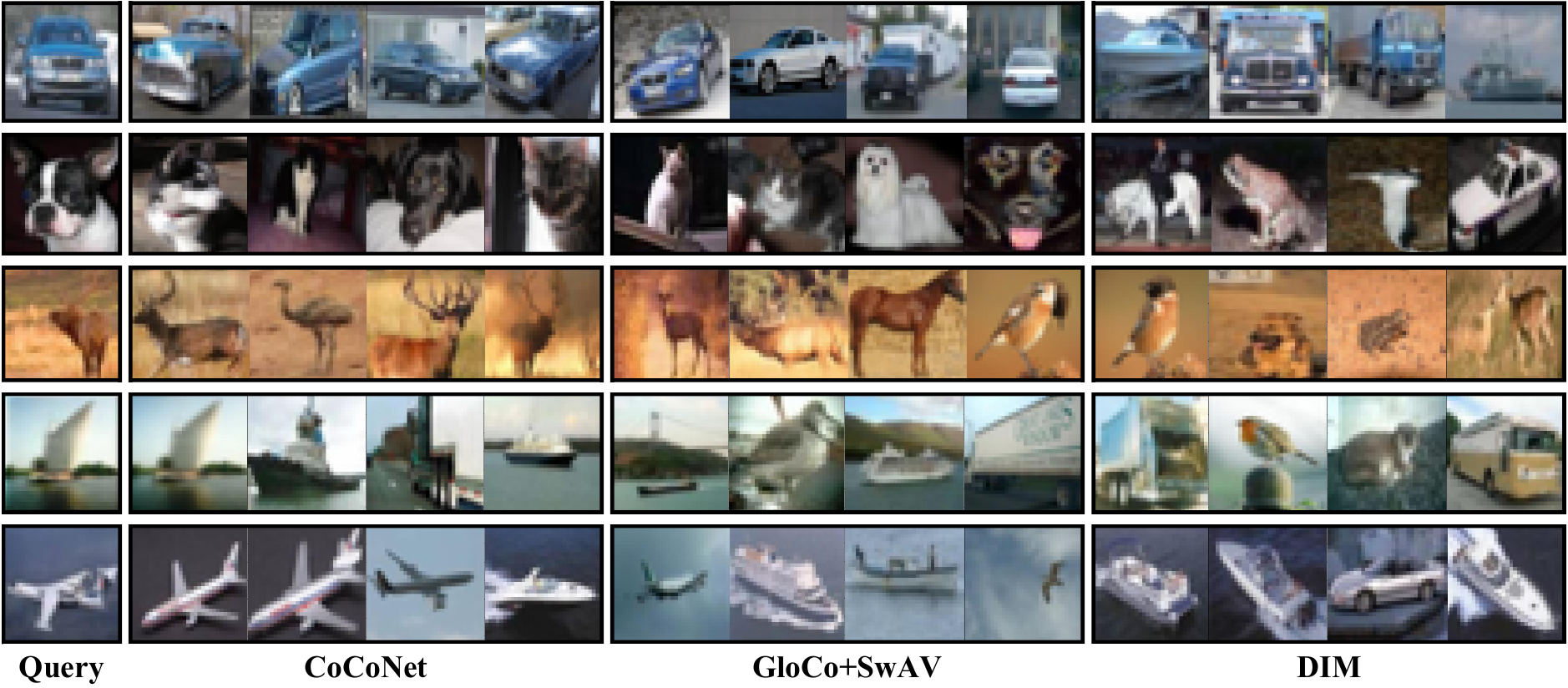}}
		\vskip -0.1in
		\caption{Visual comparisons for studying the merits of CoCoNet on the CIFAR10 dataset. We retrieved the 4 nearest neighbors to evaluate the discriminability by using $L_1$ distance. The leftmost images are randomly selected images as queries, and the other images are their nearest neighbors measured in the representations of compared methods.}
		\label{fig:queryplots}
	\end{center}
	\vskip -0.35in
\end{figure*}

\subsubsection{Case study} \label{sec:casestudy}
As shown in Figure \ref{fig:casestudy}, each class has a discriminatory set of feature elements that contribute to correct classifications, i.e, each category requires different sets of discriminative feature elements for classifications. The observations on the incorrect classifications demonstrate that the over-consistency (trivial) of feature elements is a crucial reason that the feature cannot be correctly classified, indicating that the misclassified features model much task-irrelevant information. We observe that compared with CMC, CoCoNet learns features with more salient elements, indicating that CoCoNet can learn more activated feature elements for each category in classifications. The reason behind such an observation is that CMC only leverages the typical contrastive approach to model consistency information, while CoCoNet further imposes the proposed LoCo module to extract complementarity information. Hence, in addition to the feature elements modeling consistency information, the feature elements modeling complementary information can also contribute to the classification of each category. The extra elements in the features learned by our method can be regarded as the elements modeling complementary information. Therefore, a larger amount of discriminative information can empower CoCoNet to be robust to task-irrelevant noisy information, resulting in better performance on downstream tasks.

\subsubsection{Limitations and discussion} \label{sec:limitations}
\textbf{Discussion on the time complexity.} In head-to-head comparisons, CoCoNet achieves the state-of-the-art, which supports that mining view-consistency and -complementarity knowledge can improve to model multiple views in multi-view SSL. Yet compared with benchmark self-supervised methods, CoCoNet has relatively higher time complexity in training. As shown in Figure \ref{fig:tcchart}, the computational time cost of CoCoNet is lower than SwAV but higher than the baseline CMC. We reckoned the reasons are 1) the training of the critic network of GloCo; 2) the matrix operations of LoCo. However in the test, the compared methods adopt the same paradigm, and the test time complexities are the same.

\textbf{Threats to validity \cite{DBLP:books/daglib/0029933}.} For the \textit{conclusion validity}, we followed the benchmark experimental settings \cite{rdh19, ylt20}, e.g., choice of statistical tests, choice of sample size, etc. In order to avoid the threat to validity caused by imbalanced datasets, in addition to accuracy, we further adopted F1-Measure as a metric to measure the experiments, which is shown in Table \ref{tab:f1measure}. For the \textit{internal validity}, we introduced sufficient ablation studies, demonstrated in Section \ref{sec:results}, to prove the effectiveness of the proposed parts of CoCoNet, i.e., GloCo and LoCo. To further explore whether replacing specific components of CoCoNet with variants may affect the conclusion that \textit{``the improvement in results is due to the proposed method''}, we conducted comparisons in Table \ref{tab:diffdis} and Figure \ref{fig:lowlevelabl}, and the results support the effectiveness of CoCoNet's components. For the \textit{construct validity}, a foundational assumption of multi-view SSL is stated in Assumption \ref{ass:1}, which is theoretically proved by \cite{wang2022chaos, 2008Sridharan, 2013Xu}. Moreover, such an assumption is empirically proved by \cite{ylt20, tc20, cm20} to be applicable to image-related tasks, by \cite{you2020graph} to be applicable to graph-related tasks, and by \cite{ylt20} to be applicable to video-related tasks. For the \textit{external validity}, to avoid the influence of random factors (such as random seeds) in the experiment on the results, we collected the results of 5 trials for comparisons. The average result of the last 10 epochs is used as the final result of each trial. The average results from all trials are presented in tables. We conducted comparisons on multiple downstream tasks, including image classification tasks, graph prediction tasks, and action recognition tasks, to avoid artificial experimental settings that may affect the generalization of the model. In order to further verify whether the experiments on benchmark datasets can be generalized to actual real-world scenarios, we conducted comparisons on a practical dataset, i.e., WHEBD-759, and the results demonstrate that CoCoNet can still improve the performance of benchmark supervised methods in a self-supervised manner.

\subsubsection{Visual comparisons}
As shown in Figure \ref{fig:queryplots}, the representations learned by CoCoNet lead to more interpretable metric structures since neighboring representations correspond to visually similar images of the same category. There are three reasons for this circumstance: 1) CoCoNet learns representations from multiple views instead of a single view; 2) LoCo helps to refine the representations by improving the feature's view-specific discriminability; 3) GloCo further enhances the learned representations' view-shared discriminability by measuring the discrepancy metric between views.

\section{Conclusions}
This paper proposes a novel CoCoNet to mine discriminative knowledge from multi-view data in an unsupervised manner. To this end, CoCoNet globally aligns the distributions of views in the latent space by adopting an efficient alignment method based on GSWD, which helps to capture view-consistency information. CoCoNet leverages the proposed complementarity-factor to maintain the cross-view complementarity of the latent representations on the local stage. Compared with the conventional methods, CoCoNet explores more, albeit still not full, discriminative information from multiple views. The provided theoretical and experimental analyses support the effectiveness of CoCoNet.


\section{Acknowledgements}
The authors would like to thank the associate editor and anonymous reviewers for their valuable comments. This work is supported in part by the Strategic Priority Research Program of the Chinese Academy of Sciences, Grant No. XDA19020500, National Natural Science Foundation of China No. 61976206 and No. 61832017, Key Special Project for Introduced Talents Team of Southern Marine Science and Engineering Guangdong Laboratory (Guangzhou), No. GML2019ZD0603, Beijing Outstanding Young Scientist Program NO. BJJWZYJH012019100020098, Beijing Academy of Artificial Intelligence (BAAI), China Unicom Innovation Ecological Cooperation Plan, the Fundamental Research Funds for the Central Universities, the Research Funds of Renmin University of China 21XNLG05, and Public Computing Cloud, Renmin University of China. This work is also supported in part by Intelligent Social Governance Platform, Major Innovation \& Planning Interdisciplinary Platform for the ``Double-First Class'' Initiative, Renmin University of China, and Public Policy and Decision-making Research Lab of Renmin University of China.


\ifCLASSOPTIONcaptionsoff
  \newpage
\fi



\bibliographystyle{IEEEtran}
\bibliography{reference}
%



%
\vskip -0.5in
\begin{IEEEbiography}[{\includegraphics[width=1in,height=1.25in,clip,keepaspectratio]{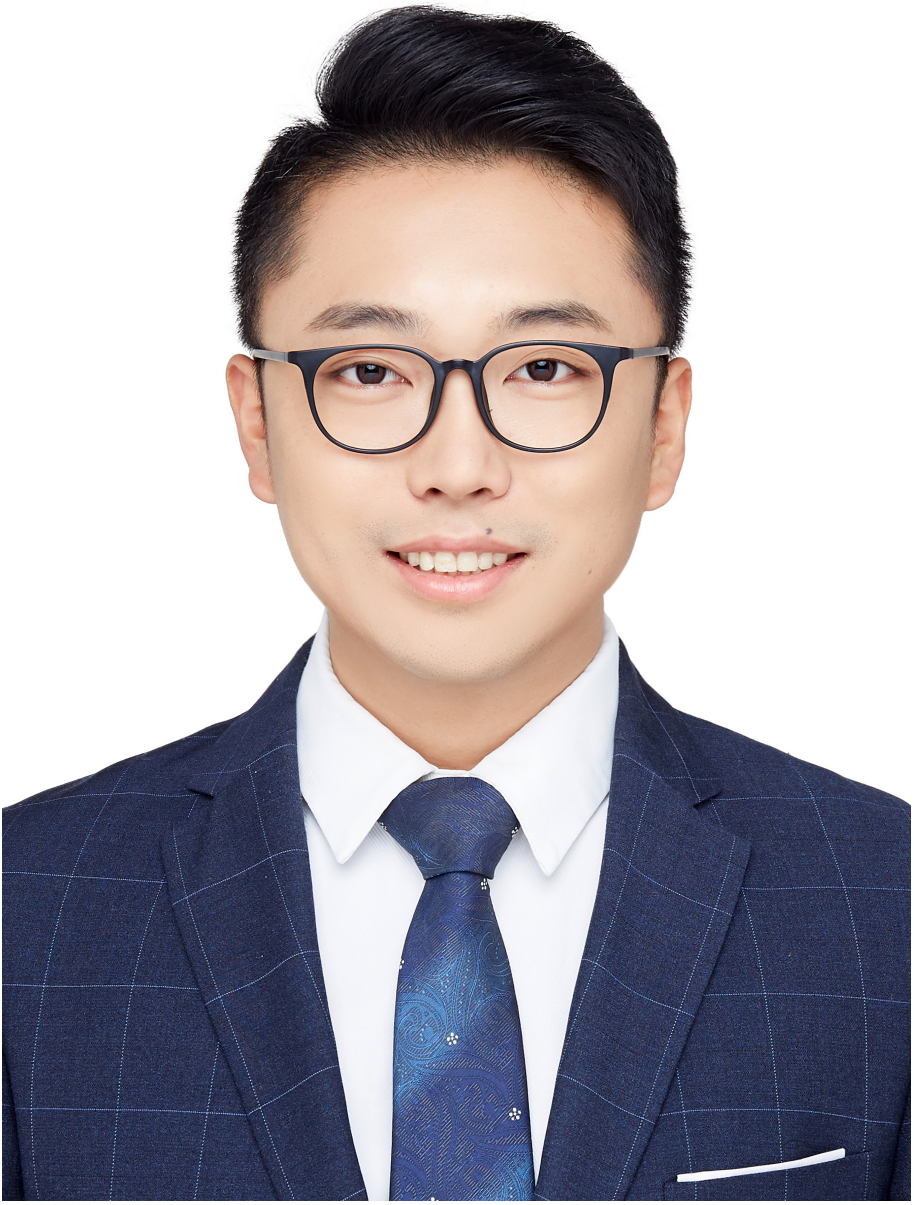}}]{Jiangmeng Li}
	received the BS degree in the department of software engineering, Xiamen University, Xiamen, China, in 2016, and the MS degree from New York University, New York, USA, in 2018. He is currently a doctoral student at the University of Chinese Academy of Sciences. His research interests include self-supervised learning, deep learning, and machine learning. He has published more than five papers in journals and conferences such as IEEE Transactions on Knowledge and Data Engineering (TKDE), International Conference on Machine Learning (ICML), International Joint Conference on Artificial Intelligence (IJCAI), etc.
\end{IEEEbiography}
\vskip -0.5in
\begin{IEEEbiography}[{\includegraphics[width=1in,height=1.25in,clip,keepaspectratio]{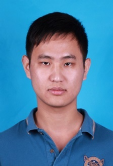}}]{Wenwen Qiang}
	received the MS degree in the department of mathematics, college of science, China Agricultural University, Beijing, in 2018. He is currently a doctoral student at the University of Chinese Academy of Sciences. His research interests include transfer learning, deep learning, and machine learning. He has published more than five papers in journals and conferences such as IEEE Transactions on Knowledge and Data Engineering (TKDE), International Conference on Machine Learning (ICML), International Joint Conference on Artificial Intelligence (IJCAI), etc.
\end{IEEEbiography}
\vskip -0.5in


\begin{IEEEbiography}[{\includegraphics[width=1in,height=1.25in,clip,keepaspectratio]{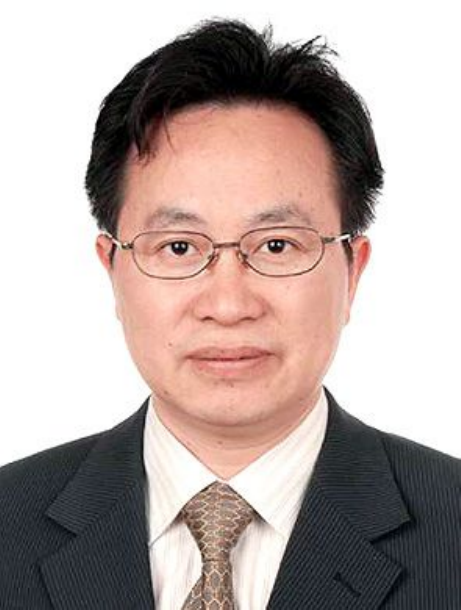}}]{Changwen Zhen}
	received the Ph.D. degree in Huazhong University of Science and Technology. He is currently a professor in Institute of Software, Chinese Academy of Science. His research interests include computer graph and artificial intelligence.
\end{IEEEbiography}
\vskip -0.5in

\begin{IEEEbiography}[{\includegraphics[width=1in,height=1.25in,clip,keepaspectratio]{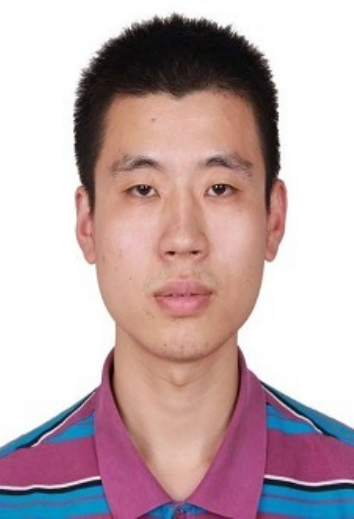}}]{Bing Su}
received the BS degree in information engineering from the Beijing Institute of Technology, Beijing, China, in 2010, and the PhD degree in electronic engineering from Tsinghua University, Beijing, China, in 2016. From 2016 to 2020, he worked with the Institute of Software, Chinese Academy of Sciences, Beijing. Currently, he is an associate professor with the Gaoling School of Artificial Intelligence, Renmin University of China. His research interests include pattern recognition, computer vision, and machine learning. He has published more than ten papers in journals and conferences such as IEEE Transactions on Pattern Analysis and Machine Intelligence (TPAMI), IEEE Transactions on Image Processing (TIP), International Conference on Machine Learning (ICML), IEEE Conference on Computer Vision and Pattern Recognition (CVPR), IEEE International Conference on Computer Vision (ICCV), etc.
\end{IEEEbiography}
\vskip -0.5in

\begin{IEEEbiography}[{\includegraphics[width=1in,height=1.25in,clip,keepaspectratio]{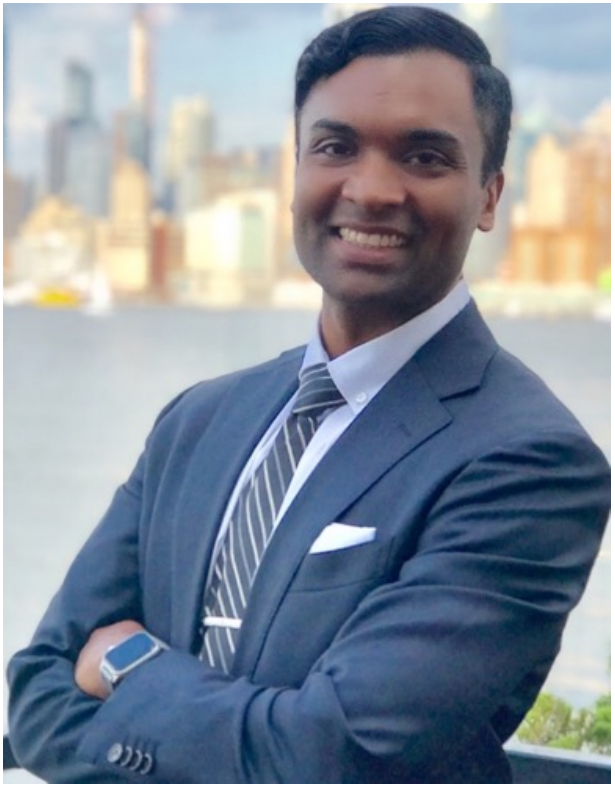}}]{Farid Razzak}
	received the Ph.D. degree from the Management Science \& Information Systems Department at Rutgers, the State University of New Jersey in 2020, the M.S. degree from the School of Professional Studies at New York University in 2009, and received a B.B.A degree from Bernard M. Baruch College at City University of New York in 2007.  He is currently an Financial Quantitative Data Scientist for the Securities and Exchange Commission as well as adjunct faculty at New York University \& Columbia University. His research interests include applied data mining for financial regulations, business applications and services.
\end{IEEEbiography}
\vskip -0.5in

\begin{IEEEbiography}[{\includegraphics[width=1in,height=1.25in,clip,keepaspectratio]{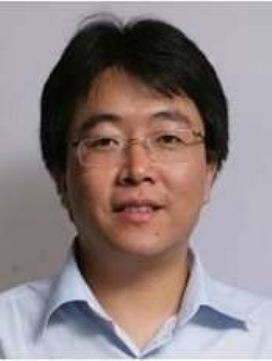}}]{Ji-Rong Wen}
	is a full professor at Gaoling School of Artificial Intelligence, Renmin University of China. He worked at Microsoft Research Asia for fourteen years and many of his research results have been integrated into important Microsoft products (e.g. Bing). He serves as an associate editor of ACM Transactions on Information Systems (TOIS). He is a Program Chair of SIGIR 2020. His main research interests include web data management, information retrieval, data mining and machine learning.
\end{IEEEbiography}
\vskip -0.5in

\begin{IEEEbiography}[{\includegraphics[width=1in,height=1.25in,clip,keepaspectratio]{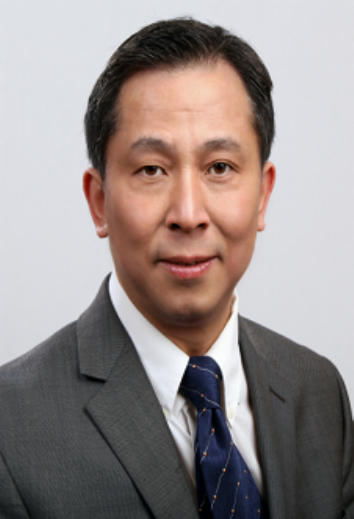}}]{Hui Xiong}
received his Ph.D. in Computer Science from the University of Minnesota - Twin Cities, USA, in 2005, the B.E. degree in Automation from the University of Science and Technology of China (USTC), Hefei, China, and the M.S. degree in Computer Science from the National University of Singapore (NUS), Singapore. He is a chair professor at the Hong Kong University of Science and Technology (Guangzhou). He is also a Distinguished Professor at Rutgers, the State University of New Jersey, where he received the 2018 Ram Charan Management Practice Award as the Grand Prix winner from the Harvard Business Review, RBS Dean's Research Professorship (2016), two-year early promotion/tenure (2009), the Rutgers University Board of Trustees Research Fellowship for Scholarly Excellence (2009), the ICDM-2011 Best Research Paper Award (2011), the Junior Faculty Teaching Excellence Award (2007), Dean's Award for Meritorious Research (2010, 2011, 2013, 2015) at Rutgers Business School, the 2017 IEEE ICDM Outstanding Service Award (2017), and the AAAI-2021 Best Paper Award (2021). Dr. Xiong is also a Distinguished Guest Professor (Grand Master Chair Professor) at the University of Science and Technology of China (USTC). For his outstanding contributions to data mining and mobile computing, he was elected an ACM Distinguished Scientist in 2014, an IEEE Fellow and an AAAS Fellow in 2020. His general area of research is data and knowledge engineering, with a focus on developing effective and efficient data analysis techniques for emerging data intensive applications. He currently serves as a co-Editor-in-Chief of Encyclopedia of GIS (Springer) and an Associate Editor of IEEE Transactions on Data and Knowledge Engineering (TKDE), IEEE Transactions on Big Data (TBD), ACM Transactions on Knowledge Discovery from Data (TKDD) and ACM Transactions on Management Information Systems (TMIS). He has served regularly on the organization and program committees of numerous conferences, including as a Program Co-Chair of the Industrial and Government Track for the 18th ACM SIGKDD International Conference on Knowledge Discovery and Data Mining (KDD), a Program Co-Chair for the IEEE 2013 International Conference on Data Mining (ICDM), a General Co-Chair for the IEEE 2015 International Conference on Data Mining (ICDM), and a Program Co-Chair of the Research Track for the 24th ACM SIGKDD International Conference on Knowledge Discovery and Data Mining (KDD2018).

\end{IEEEbiography}




\clearpage

\begin{figure*}
	\vskip 0in
	\begin{center}
		\centerline{\includegraphics[width=1.6\columnwidth]{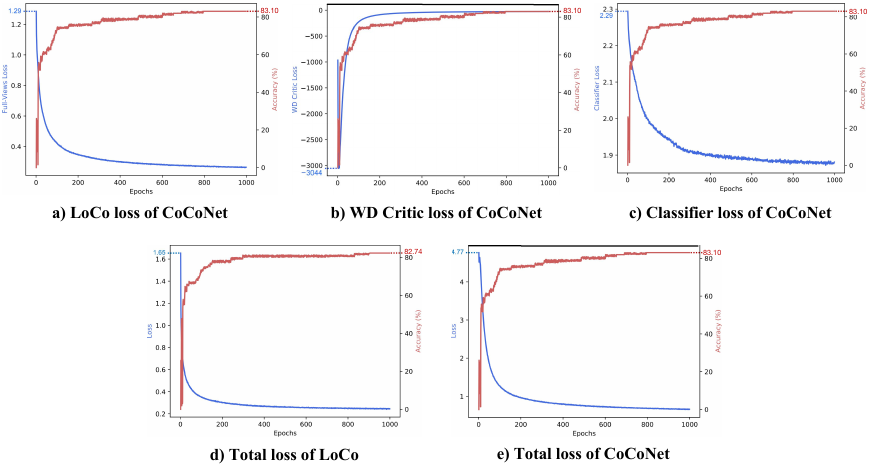}}
		\vskip -0.1in
		\caption{Extended verification for studying the loss convergence properties of CoCoNet and LoCo on the CIFAR10 dataset.}
		\label{fig:lossconverge}
	\end{center}
	\vskip -0.35in
\end{figure*}

\section{Appendix}
\subsection{Theoretical proofs} \label{sec:proof}
In Section \ref{sec:ta}, we propose two theorems: Theorem \ref{the:1}, i.e., the View-Consistency information with a potential loss of View-Specific Noise $\epsilon^{noise}$ theorem; Theorem \ref{the:2}, i.e., the View-Complementarity information, which is view-specific and task-relevant theorem. Here, we provide a formalized view to describe the proofs of them.

\subsubsection{Proof of Theorem \ref{the:1}} \label{pro:1}
We validate that $I(X^1;Y) \geq I(X^1;Y|\epsilon^{noise})$ by introducing the KL-divergence \cite{1951Ls} measurement into the calculation of mutual information:

\begin{proof}
	To proof $I(X^1;Y) \geq I(X^1;Y|\epsilon^{noise})$\\
	\\$\because I(X;Y) = \sum\limits_{x \in X}\sum\limits_{y \in Y}{\mathcal{P}(x, y)}{\log\frac{\mathcal{P}(x, y)}{{\mathcal{P}(x)\cdot}\mathcal{P}(y)}}$\\
	\\$\therefore I(X^1;Y) = \sum\limits_{x \in X^1}\sum\limits_{y \in Y}{\mathcal{P}(x, y)}{\log\frac{\mathcal{P}(x, y)}{{\mathcal{P}(x)\cdot}\mathcal{P}(y)}}$\\
	\\$\therefore I(X^1;Y|\epsilon^{noise}) = \sum\limits_{x \in X^1}\sum\limits_{y \in \{Y - \epsilon^{noise}\}}{\mathcal{P}(x, y)}{\log\frac{\mathcal{P}(x, y)}{{\mathcal{P}(x)\cdot}\mathcal{P}(y)}}$\\
	\\And KL-divergence is defined as:\\
	\\$D_{KL}(P||Q) = \int\limits{\mathcal{P}(x)\log\frac{\mathcal{P}(x)}{\mathcal{Q}(x)}}dx$\\
	\\The discrete form of KL-divergence is:\\
	\\$D_{KL}(P||Q) = \sum\limits{\mathcal{P}(x)\log\frac{\mathcal{P}(x)}{\mathcal{Q}(x)}}$\\
	\\We try to use KL divergence to fit the calculation of mutual information, and the $\mathcal{P}$ and $\mathcal{Q}$ are approximated by:\\
	\\$\hat{\mathcal{P}}(x) = \mathcal{P}(x, y)$\\
	\\$\hat{\mathcal{Q}}(x) = \mathcal{P}(x) \cdot \mathcal{P}(y)$\\
	\\Put $\hat{\mathcal{P}}(x)$ and $\hat{\mathcal{Q}}{(x)}$ into the above formula of the discrete KL-divergence:\\
	\\$D_{KL}(\mathcal{P}_{XY}||\mathcal{P}_X\mathcal{P}_Y) = \sum\limits_{x \in X}\sum\limits_{y \in Y}{\mathcal{P}(x, y)}{\log\frac{\mathcal{P}(x, y)}{{\mathcal{P}(x)\cdot}\mathcal{P}(y)}}$\\
	\\Then, we get:\\
	\\$D_{KL}(\mathcal{P}_{XY}||\mathcal{P}_X\mathcal{P}_Y) = I(X;Y)$\\
	\\$\therefore I(X^1;Y) = D_{KL}(\mathcal{P}_{X^1Y}||\mathcal{P}_{X^1}\mathcal{P}_Y)$\\
	\\$\therefore I(X^1;Y|\epsilon^{noise}) = D_{KL}(\mathcal{P}_{X^1\{Y-\epsilon^{noise}\}}||\mathcal{P}_{X^1}\mathcal{P}_{\{Y-\epsilon^{noise}\}})$\\
	\\Because $Y$ is not fully compressed, which means $I(X^1;Y|T) \geq 0$, it is acknowledged that $\epsilon^{noise} \geq 0$. For the KL-divergence, $\mathcal{P}_{X^1}$ is constant, and $Y \geq \{Y-\epsilon^{noise}\}$. Therefore, compared with the joint $\mathcal{P}_{X^1Y}$ and $\mathcal{P}_{X^1} \cdot \mathcal{P}_Y$, the distributions of the joint $\mathcal{P}_{X^1\{Y-\epsilon^{noise}\}}$ and $\mathcal{P}_{X^1} \cdot \mathcal{P}_{\{Y-\epsilon^{noise}\}}$ are more consistent, and then we get:\\
	\\\begin{small}
		$D_{KL}(\mathcal{P}_{X^1\{Y-\epsilon^{noise}\}}||\mathcal{P}_{X^1}\mathcal{P}_{\{Y-\epsilon^{noise}\}}) \leq D_{KL}(\mathcal{P}_{X^1Y}||\mathcal{P}_{X^1}\mathcal{P}_Y)$
	\end{small}\\
	\\$\therefore I(X^1;Y) \geq I(X^1;Y|\epsilon^{noise})$
	\\
\end{proof}

\subsubsection{Proof of Theorem \ref{the:2}} \label{pro:2}
We validate that $I(Y;T) \leq I(Y;T) + I(X^1;Y^*;T|X^2) + I(X^2;Y^*;T|X^1)$ by introducing the KL-divergence \cite{1951Ls} measurement into the calculation of mutual information:

\begin{proof}
	To proof $I(Y;T) \leq I(Y;T) + I(X^1;Y^*;T|X^2) + I(X^2;Y^*;T|X^1)$\\
	\\Transpose the mentioned equation:\\
	\\$I(Y;T) - I(Y;T) \leq I(X^1;Y^*;T|X^2) + I(X^2;Y^*;T|X^1)$\\
	\\$I(X^1;Y^*;T|X^2) + I(X^2;Y^*;T|X^1) \geq 0$\\
	\\Since, we assume that $Y^*$ is a extended representation of $Y$, and it can contain part of the View-Complementarity information, i.e., $I(X^1;T|X^2) + I(X^2;T|X^1)$. Therefore, we only need to proof that $I(X^1;T|X^2)$ or $I(X^2;T|X^1)$ is not null, because the mutual information cannot be negative. The proof is reformed to:\\
	\\$I(X^1;T|X^2) \geq 0$\\
	\\$I(X^2;T|X^1) \geq 0$\\
	\\$\because I(X;Y) = \sum\limits_{x \in X}\sum\limits_{y \in Y}{\mathcal{P}(x, y)}{\log\frac{\mathcal{P}(x, y)}{{\mathcal{P}(x)\cdot}\mathcal{P}(y)}}$\\
	\\$\therefore I(X^1;T|X^2) = \sum\limits_{x \in X^1}\sum\limits_{y \in \{T-X^2\}}{\mathcal{P}(x, y)}{\log\frac{\mathcal{P}(x, y)}{{\mathcal{P}(x)\cdot}\mathcal{P}(y)}}$\\
	\\$\therefore I(X^2;T|X^1) = \sum\limits_{x \in X^2}\sum\limits_{y \in \{T-X^1\}}{\mathcal{P}(x, y)}{\log\frac{\mathcal{P}(x, y)}{{\mathcal{P}(x)\cdot}\mathcal{P}(y)}}$\\
	\\As the equation deducing in Proof \ref{pro:1}, we use the discrete form of KL divergence to fit the calculation of mutual information, and then we get:\\
	\\$D_{KL}(\mathcal{P}_{XY}||\mathcal{P}_X\mathcal{P}_Y) = I(X;Y)$\\
	\\$\therefore I(X^1;T|X^2) = D_{KL}(\mathcal{P}_{X^1{\{T-X^2\}}}||\mathcal{P}_{X^1}\mathcal{P}_{\{T-X^2\}})$\\
	\\$\therefore I(X^2;T|X^1) = D_{KL}(\mathcal{P}_{X^2{\{T-X^1\}}}||\mathcal{P}_{X^2}\mathcal{P}_{\{T-X^1\}})$\\
	\\The downstream task-relevant information $T$ is not fully contained in any view of data, e.g., $X^1$ or $X^2$, with a strong possibility meant for $H(T|X^1) \geq 0$ and $H(T|X^2) \geq 0$, and so, based on the view of KL-divergence, we reckon that $\mathcal{P}_{\{T-X^1\}}$ and $\mathcal{P}_{\{T-X^2\}}$ exist. For the KL-divergence, $\mathcal{P}_{X^1}$ or $\mathcal{P}_{X^2}$ is constant, and therefore it is very likely that $D_{KL}(\mathcal{P}_{X^1{\{T-X^2\}}}||\mathcal{P}_{X^1}\mathcal{P}_{\{T-X^2\}})$ or $D_{KL}(\mathcal{P}_{X^2{\{T-X^1\}}}||\mathcal{P}_{X^2}\mathcal{P}_{\{T-X^1\}})$ exists in like manner:\\
	\\$D_{KL}(\mathcal{P}_{X^1{\{T-X^2\}}}||\mathcal{P}_{X^1}\mathcal{P}_{\{T-X^2\}}) \geq 0$\\
	\\$D_{KL}(\mathcal{P}_{X^2{\{T-X^1\}}}||\mathcal{P}_{X^2}\mathcal{P}_{\{T-X^1\}}) \geq 0$\\
	\\$\therefore I(X^1;T|X^2) \geq 0$ and $I(X^2;T|X^1) \geq 0$\\
	\\$\therefore I(X^1;Y^*;T|X^2) \geq 0$ and $I(X^2;Y^*;T|X^1) \geq 0$\\
	\\$\therefore I(Y;T) \leq I(Y;T) + I(X^1;Y^*;T|X^2) + I(X^2;Y^*;T|X^1)$
	\\
\end{proof}

\subsection{How does GSWD implement nonlinear mapping?} \label{sec:nonlmap}
As mentioned in Section \ref{sec:gloco}, $GR_\vartheta$ represents one-dimensional nonlinear projection operation on the probability measure ${P _r}$ and ${P _g}$, which is defined as:
\begin{equation}
	G{R_\vartheta }{P_i}\left( x_i \right) = \int_{{\Sigma _i}} {{P_i}\left( x_i \right)} \delta \left( {t - ge\left( {x_i,\vartheta } \right)} \right)d{x_i}
\end{equation}
where $i \in \left\{ {r,g} \right\}$, $\delta \left( {\cdot} \right)$ is the one-dimensional Dirac delta function, $t \in \mathbb{R}$, and ${ge\left( {\cdot,\vartheta } \right)}$ is a pre-defined nonlinear function that must satisfy the following four conditions:
\begin{itemize}	
	\item ${ge\left( {\cdot,\vartheta } \right)}$ is a real-valued $C^\infty $ function.
	\item ${ge\left( {\cdot,\vartheta } \right)}$ is homogeneous of degree one in $\vartheta$, \textit{i.e.},
	\begin{equation}
		\forall \upsilon  \in R,{ge\left( {\cdot,\upsilon  \vartheta } \right)} = \upsilon {ge\left( {\cdot,\vartheta } \right)}
	\end{equation}
	\item ${ge\left( {\cdot,\vartheta } \right)}$ is non-degenerate in the sense that
	\begin{equation}
		\forall \vartheta  \in {\Omega _\vartheta }\backslash \left\{ 0 \right\},x \in X,\frac{{\partial ge}}{{\partial x}}\left( {x,\vartheta } \right) \ne 0
	\end{equation}
	\item The mixed Hessian of ${ge\left( {\cdot,\vartheta } \right)}$ is strictly positive, \textit{i.e.}
	\begin{equation}
		\det \left( {{{\left( {\frac{{{\partial ^2}ge}}{{\partial {x_i}\partial {\vartheta _j}}}} \right)}_{i,j}}} \right) > 0
	\end{equation}
\end{itemize}
${ge\left( {\cdot,\vartheta } \right)}$ is a nonlinear function so that the GSWD achieves to map high-dimensional representations to one-dimensional representations in a nonlinear manner.

\subsection{Extended comparisons} \label{sec:extended}
In this section, we conduct further experiments to study the intrinsic property of our proposed method.

\begin{figure}
	\vskip 0.1in
	\begin{center}
		\centerline{\includegraphics[width=0.95\columnwidth]{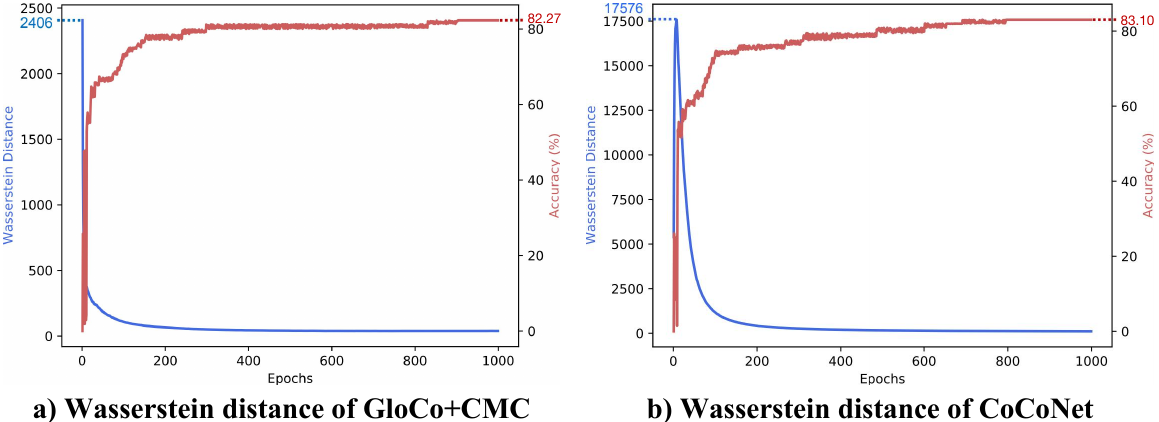}}
		\vskip 0in
		\caption{Extended verification of studying the Wasserstein distance changing trend properties of GloCo+CMC and CoCoNet in optimization on the CIFAR10 dataset.}
		\label{fig:wdrecord}
	\end{center}
	\vskip -0.35in
\end{figure}

\subsubsection{Study on the Wasserstein distance changing trends}
As $a)$ and $b)$ in Figure \ref{fig:wdrecord}, they show the changing trends of the sum of Wasserstein distances between views in optimization. In $a)$, it is based on GloCo+CMC, and the result of CoCoNet is shown in $b)$. We found that, although the Wasserstein distance in both $a)$ and $b)$ can reach convergence, the Wasserstein distance in $b)$ converges slower than in $a)$. Additionally, the Wasserstein distance's peak value in $b)$ is much higher than that in $a)$. The additional local complementarity preserving module affects the convergence of Wasserstein distance, and eventually, both the loss of the local complementarity preserving module and Wasserstein distance can converge. As such, the findings indicate that the game of simultaneously training the local complementarity and global consistency preserving modules is similar to the adversarial learning process, which helps CoCoNet to learn features with local complementarity and global consistency.

\begin{figure}
	\vskip 0.1in
	\begin{center}
		\centerline{\includegraphics[width=0.95\columnwidth]{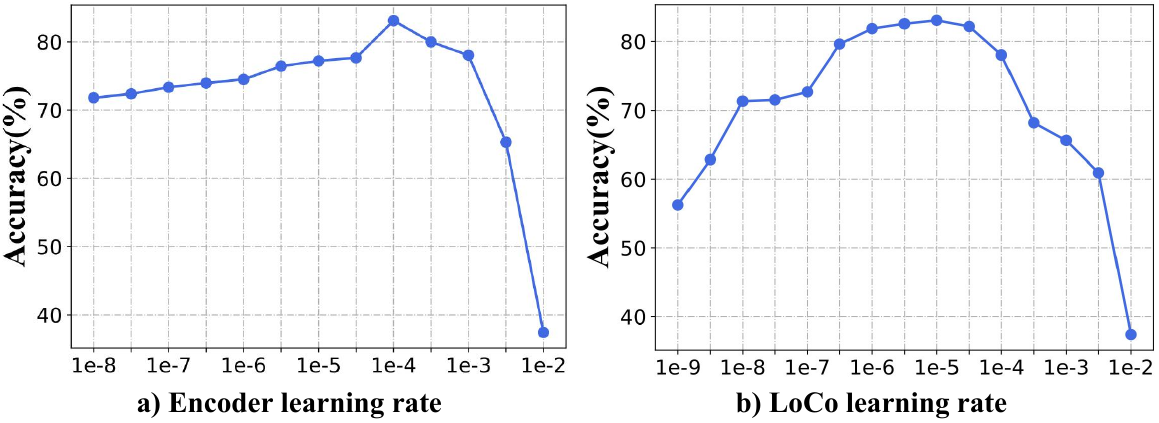}}
		\vskip 0in
		\caption{Extended verification of studying learning rates of the encoders and LoCo network in optimization on the benchmark CIFAR10 dataset.}
		\label{fig:lr}
	\end{center}
	\vskip -0.35in
\end{figure}

\subsubsection{Study on optimizations}
As manifested in Figure \ref{fig:lr}, we studied the learning rates of the encoders and the local complementarity preserving module (i.e., LoCo) respectively, while we excluded the learning rate parameter study of the global consistency preserving network (i.e., GloCo) from the experiment, because we found that the learning rate of the GloCo has little effect on the performance of the proposed method. Furthermore, the objective of GloCo is to calculate the Wasserstein distances between views, so the learning rate of GloCo does not enhance the accuracy of CoCoNet. To explore the influence of different parts of CoCoNet, we selected wide ranges for the target learning rates, e.g., a range of \{$10^{-9}, 10^{-8}, ..., 10^{-3}-, 10^{-2}$\} is for the uniform learning rate of the encoders, a range of \{$10^{-8}, 10^{-7}, ..., 10^{-3}, 10^{-2}$\} is for the learning rate of LoCo and fixed the other learning rates. We observed that the appropriate learning rates of the encoders and LoCo can promote the performance of our proposed method by a wide margin. Consider that the learning rate determines the step length of the weight iteration, so it is a very sensitive parameter. It has a significant effect on the model performance (e.g., the initial learning rate must have an optimal value. If it is too large, the model will not converge, and if it is too small, the model will converge slowly or fail to learn). Therefore, we concluded that the encoders and LoCo both have great impacts on the performance of CoCoNet.

\subsubsection{Study on the loss convergence}
From $a)$, $b)$, and $c)$ in Figure \ref{fig:lossconverge}, it can be found that all losses can reach convergence smoothly, which proves that the gradient descent of the loss of each part will not conflict with others in optimization. Moreover, it also verifies the integrity, robustness, and consistency of the proposed method.

We also conducted additional experiments to clarify the loss convergence of CoCoNet and the ablation model, i.e., LoCo. As in Figure \ref{fig:lossconverge}, plots $d)$ and $e)$ separately show the relationships between the total loss and the accuracy based on LoCo and CoCoNet. We can find that in both of the optimization processes of LoCo and CoCoNet models, the total losses will eventually converge, while CoCoNet will be slightly slower to reach convergence during the training process. Meanwhile, as demonstrated in the classification comparisons, CoCoNet outperforms LoCo, which indicates that the global consistency preserving module can indeed enhance the performance of the proposed method. Hence, with the addition of GloCo, the initial loss tends to increase, and this shows that the additional module allows CoCoNet to have greater optimization potential.

\end{document}